\newcommand{\printfnsymbol}[1]{%
  \textsuperscript{\@fnsymbol{#1}}%
}
\def\eqref#1{equation~\ref{#1}}
\def\Eqref#1{Equation~\ref{#1}}
\def\1{\bm{1}}
\def\rva{{\mathbf{a}}}
\def\rve{{\mathbf{e}}}
\def\rvg{{\mathbf{g}}}
\def\rvu{{\mathbf{i}}}
\def\rvu{{\mathbf{u}}}
\def\rvw{{\mathbf{w}}}
\def\rvx{{\mathbf{x}}}
\def\rvy{{\mathbf{y}}}
\def\rvz{{\mathbf{z}}}
\def\hrvx{{\hat{\rvx}}}
\def\hrvz{{\hat{\rvz}}}
\def\trve{{\tilde{\rve}}}
\def\trvw{{\tilde{\rvw}}}
\def\rvzero{{\mathbf{0}}}
\def\rvone{{\mathbf{1}}}
\def\rmA{{\mathbf{A}}}
\def\rmD{{\mathbf{D}}}
\def\rmI{{\mathbf{I}}}
\def\rmQ{{\mathbf{Q}}}
\def\rmW{{\mathbf{W}}}
\def\trmW{{\tilde{\rmW}}}
\DeclareMathAlphabet{\mathsfit}{\encodingdefault}{\sfdefault}{m}{sl}
\SetMathAlphabet{\mathsfit}{bold}{\encodingdefault}{\sfdefault}{bx}{n}
\def\sN{{\mathbb{N}}}
\def\sR{{\mathbb{R}}}
\newcommand{\E}{\mathbb{E}}
\newcommand{\R}{\mathbb{R}}
\DeclareMathOperator*{\argmin}{arg\,min}
\DeclareMathOperator*{\relu}{ReLU}
\newcommand{\st}{\operatorname{s.t.}}
\DeclareMathOperator{\trace}{trace}
\newcommand{\norm}[1]{\left\lVert#1\right\rVert}
\newcommand{\abs}[1]{\left\lvert#1\right\rvert}
\newtheorem{theorem}{Theorem}
\newtheorem{lemma}{Lemma}
\newtheorem{definition}{Definition}
\newtheorem{corollary}{Corollary}
\newcommand{\Support}{{\textnormal{Support}}}
\newcommand{\thickbar}[1]{\mathbf{\bar{\text{$#1$}}}}
\def\N{{\mathcal{N}}}
\def\S{{\mathcal{S}}}
\def\hS{{\hat{\S}}}
\DeclareMathOperator*{\spark}{spark}
\DeclareMathOperator{\rank}{rank}
\def\brmW{{\thickbar \rmW}}
\def\lmax{{\lambda_{\max}}}
\def\lmin{{\lambda_{\min}}}
\DeclareMathOperator{\subspark}{sub-spark}
\DeclareMathOperator{\subrank}{sub-rank}
\title{When and How Can Deep Generative Models be Inverted?}
\author{%
  Aviad Aberdam\thanks{These authors are listed in alphabetical order.} \\
  Electrical Engineering \\
  Technion, Israel\\
  {\tt\small aaberdam@cs.technion.ac.il}
  \and
  Dror Simon\printfnsymbol{1} \\
  Computer Science \\
  Technion, Israel\\
  {\tt\small dror.simon@cs.technion.ac.il}
  \and
  Michael Elad \\
  Computer Science \\
  Technion, Israel\\
  {\tt\small elad@cs.technion.ac.il} \\
}
\date{}
\begin{document}

\maketitle

\begin{abstract}

    Deep generative models (e.g. GANs and VAEs) have been developed quite extensively in recent years. Lately, there has been an increased interest in the inversion of such a model, i.e. given a (possibly corrupted) signal, we wish to recover the latent vector that generated it. 
    Building upon sparse representation theory, we define conditions that are applicable to any inversion algorithm (gradient descent, deep encoder, etc.), under which such generative models are invertible with a unique solution. Importantly, the proposed analysis is applicable to any trained model, and does not depend on Gaussian i.i.d. weights. Furthermore, we introduce two layer-wise inversion pursuit algorithms
    for trained 
    generative networks of arbitrary depth, and accompany these with recovery guarantees. Finally, we validate our theoretical results numerically and show that our method outperforms gradient descent when inverting such generators, both for clean and corrupted signals.
    
    
\end{abstract}

\begin{figure}[t]
    \centering
    \begin{subfigure}{0.32\textwidth}
    \centering
        \includegraphics[width=1\linewidth]{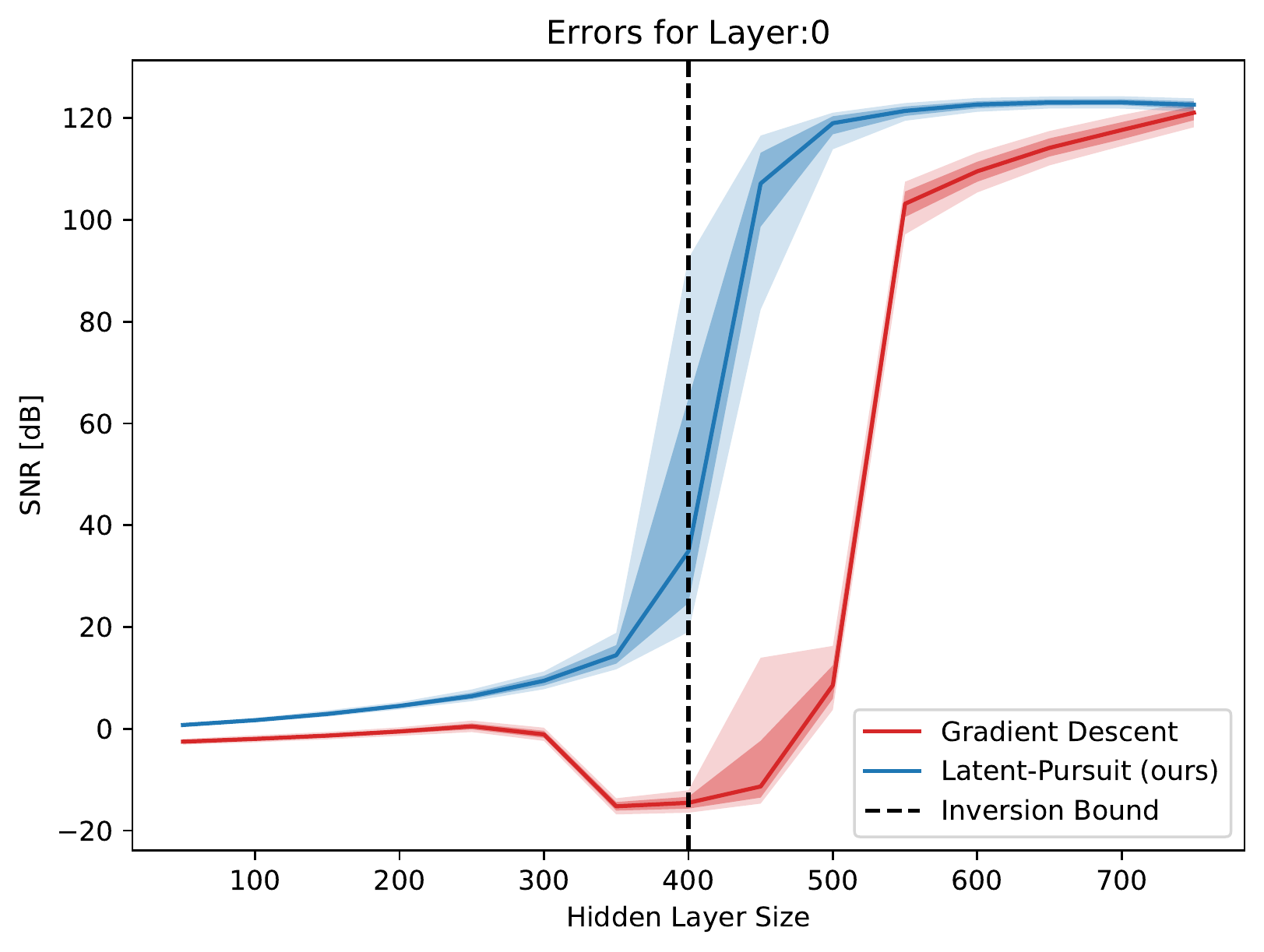}
        \caption{Latent vector recovery}
    \end{subfigure}
    \begin{subfigure}{0.32\textwidth}
    \centering
        \includegraphics[width=1\linewidth]{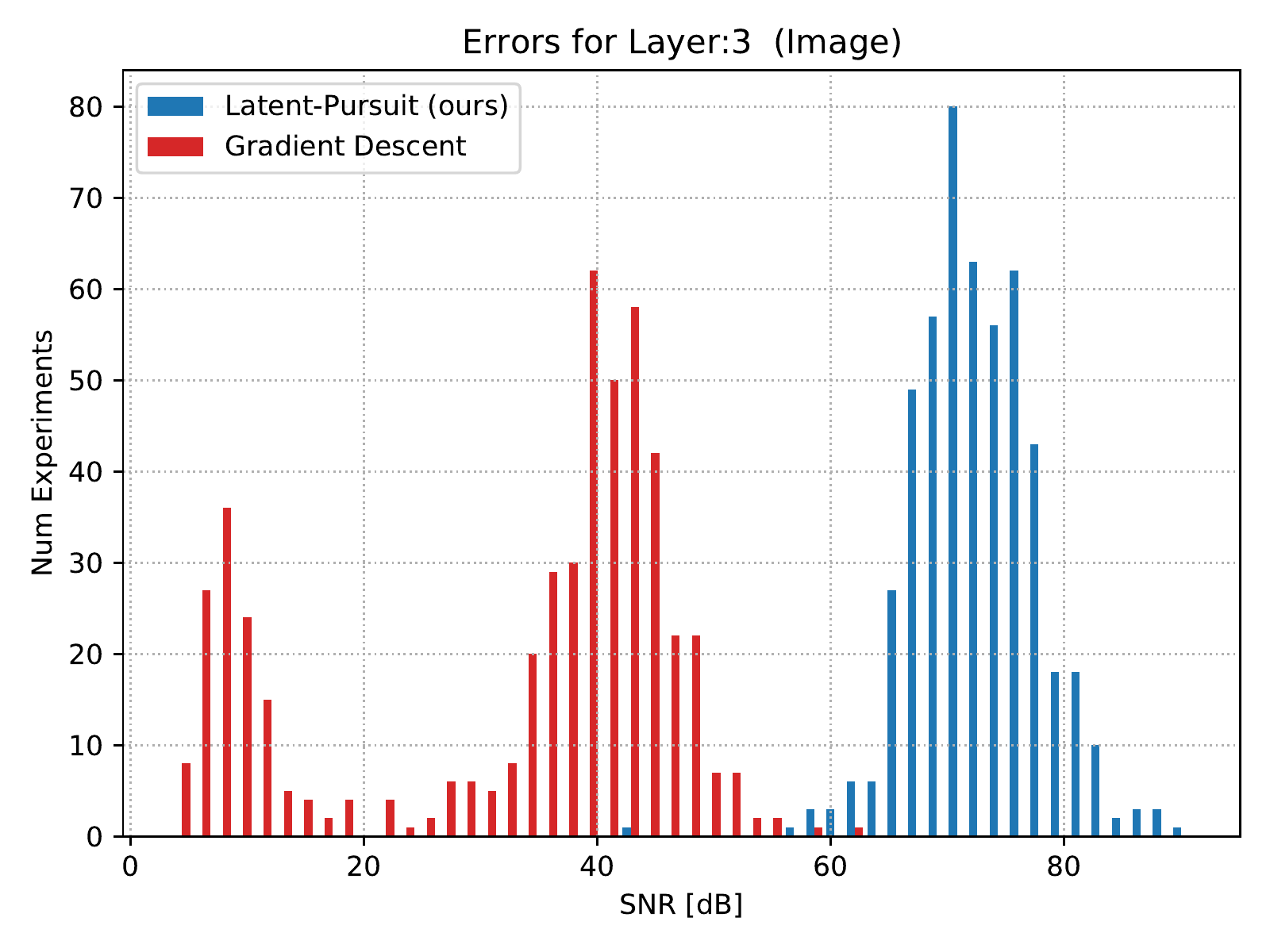}
        \caption{Image reconstruction}
    \end{subfigure}
    \begin{subfigure}{0.25\textwidth}
    \centering
        \includegraphics[trim={195 80 85 88},clip,width=1\linewidth]{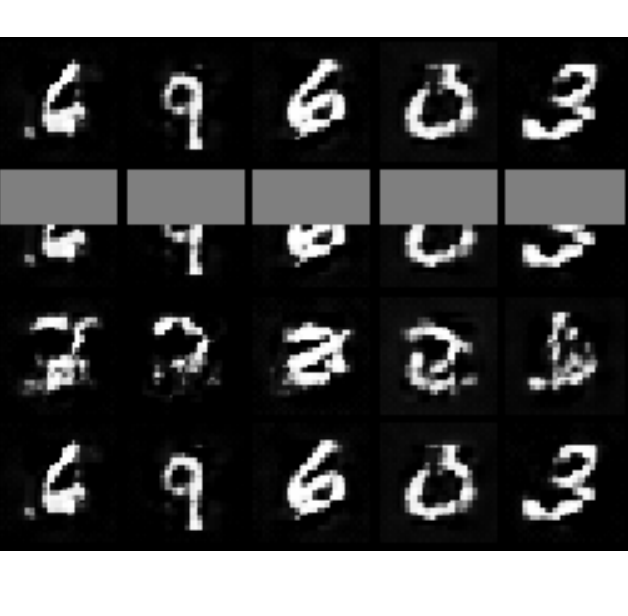}
        \caption{Image inpainting}
    \end{subfigure}
    \caption{In this paper we introduce conditions for a generative model to be invertible regardless of the inversion algorithm. As depicted in (a), in GANs with random weights, a too small hidden layer results in a non-invertible generative process. We propose the Latent-Pursuit algorithm, which empirically outperforms the gradient descent method on trained GANs in the tasks of clean images reconstruction (b) and image inpainting (c) (from top to bottom: clean image, corrupted image, gradient descent and our results).}
    \label{fig:my_label}
\end{figure}

\section{Introduction}

In the past several years, deep generative models, e.g. Generative Adversarial Networks (GANs) \cite{goodfellow2014generative} and Variational Auto-Encoders (VAEs) \cite{kingma2013auto}, have been greatly developed, leading to networks that can generate images, videos, and speech voices among others, that look and sound authentic to humans. Loosely speaking, these models learn a mapping from a random low-dimensional latent space to the training data distribution, obtained in an unsupervised manner.


Interestingly, deep generative models are not used only to generate arbitrary signals. Recent work rely on the inversion of these models to perform visual manipulation, compressed sensing, image interpolation, image generation, and others \cite{zhu2016generative,bora2017compressed,simon2020barycenters,zhu2016generative}. In this work, we study this inversion task.
Formally, denoting the signal to invert by $\rvy\in\sR^n$, the generative model as $G:\sR^{n_0}\to\sR^n$, and the latent vector as $\rvz\in\sR^{n_0}$, we study the following
inversion problem:
\begin{equation}
    \rvz^* = \argmin_{\rvz} \frac{1}{2} \|G(\rvz)-\rvy\|_2^2,
    \label{eq:inverse_problem}
\end{equation}
where $G$ is assumed to be a feed-forward neural network.

The first question that comes to mind is whether this model is invertible, or equivalently, does \Eqref{eq:inverse_problem} have a unique solution? In this work, we establish theoretical conditions that guarantee the invertibility of the model $G$. Notably, the provided theorems are applicable to general non-random generative models, and do not depend on the chosen inversion algorithm.

Once the existence of a unique solution is established, the next challenge is to provide a recovery algorithm that is guaranteed to obtain the sought solution. A common and simple approach is to draw a random vector $\rvz$ and iteratively update it using gradient descent, opting to minimize \Eqref{eq:inverse_problem} \cite{zhu2016generative,bora2017compressed}. Unfortunately, this approach has theoretical guarantees only in limited scenarios \cite{hand2018phase,hand2019global}, since the inversion problem is generally non-convex. An alternative approach is to train an encoding neural network that maps images to their latent vectors \cite{zhu2016generative,donahue2016adversarial,bau2019seeing,simon2020barycenters}; however, this method is not accompanied by any theoretical justification.

We adopt a third approach in which the generative model is inverted in an analytical fashion. Specifically, we perform the inversion layer-by-layer, similar to \cite{lei2019inverting}. Our approach is based on the observation that every hidden layer is an outcome of a weight matrix multiplying a sparse vector, followed by a $\relu$ activation. By utilizing sparse representation theory, the proposed algorithm ensures perfect recovery in the noiseless case and bounded estimation error in the noisy one. Moreover, we show numerically that our algorithm outperforms gradient descent in several tasks, including reconstruction of noiseless and corrupted images.


\paragraph{Main contributions:} The contributions of this work are both theoretical and practical. We derive theoretical conditions for the invertiblity of deep generative models by ensuring a unique solution for the inversion problem defined in \Eqref{eq:inverse_problem}. Then, by leveraging the inherent sparsity of the hidden layers, we introduce a layerwise inversion algorithm with provable guarantees in the noiseless and noisy settings for trained fully-connected generators. To the best of our knowledge, this is the first work that provides such guarantees for general (non-random) models, addressing both the conceptual inversion and provable algorithms for solving \Eqref{eq:inverse_problem}. Finally, we provide numerical experiments, demonstrating the superiority of our approach over gradient descent in various scenarios.

\subsection{Related Work}

\paragraph{Inverting deep generative models:} A tempting approach for solving \Eqref{eq:inverse_problem} is to use first order methods such as gradient descent. Even though this inversion is generally non-convex, the works in \cite{hand2019global,hand2018phase} show that if the weights are random then, under additional assumptions, no spurious stationary points exist, and thus gradient descent converges to the optimal solution. A different analysis, given in \cite{latorre2019fast}, studies the case of strongly smooth generative models that are near isometry. In this work, we study the inversion of general (non-random and non-smooth) $\relu$ activated generative networks, and provide a provable algorithm that empirically outperforms gradient descent.
A close but different line of theoretical work analyze the compressive sensing abilities of trained deep generative networks \cite{shah2018solving,bora2017compressed}. That said, these works assume that an ideal inversion algorithm, solving \Eqref{eq:inverse_problem}, exists.

\paragraph{Layered-wise inversion:} The closest work to ours, and indeed its source of inspiration, is \cite{lei2019inverting}, where the authors proposed a novel scheme for inverting generative models. By assuming that the input signal was corrupted by bounded noise in terms of $\ell_1$ or $\ell_{\infty}$, they suggest inverting the model using linear programs layer-by-layer. That said, to assure a stable inversion, their analysis is restricted to cases where: (i) the weights of the network are Gaussian i.i.d. variables; (ii) the layers expand such that the number of non-zero elements in each layer is larger than the size of the entire layer preceding it; and (iii) that the last activation function is either $\relu$ or leaky-$\relu$. Unfortunately, as the authors mention in their work, these three assumptions often do not hold in practice. In this work, we do not rely on the distribution of the weights nor on the chosen activation function of the last layer. Furthermore, we relax the expansion assumption and rely only on the expansion of the number of non-zero elements instead.

\paragraph{Neural networks and sparse representation:} In the grand search for a profound theoretical understanding for deep learning, a series of papers suggested a connection between neural networks and sparse coding \cite{papyan2017convolutional,sulam2018multilayer,chun2019convolutional,sulam2019multi,romano2019adversarial,xin2016maximal}. In short, this line of work suggests that the forward pass of a neural network is in fact a pursuit for a multilayer sparse representation. In this work, we expand this proposition by showing that the inversion of a generative neural network is based on sequential sparse coding steps.

\section{The Generative Model}
\label{sec:gan_model}

\subsection{Notations} 
\label{sec:notations}
We use bold uppercase letters to represent matrices, and bold lowercase letters to represent vectors. The vector $\rvw_{j}$ represents the $j$th column in the matrix $\rmW$. Similarly, the vector $\rvw_{i,j}$ represents the $j$th column in the matrix $\rmW_i$. The activation function $\relu$ is the entry-wise operator $\relu(\rvu) = \max \{\rvu, \rvzero\}$, and $\circ$ denotes an Hadamard product. We denote by $\spark(\rmW)$ the smallest number of columns in $\rmW$ that are linearly-dependent, and by $\norm{\rvx}_0$ the number of non-zero elements in $\rvx$.
The mutual coherence of a matrix $\rmW$ is defined as:
\begin{equation}
    \mu(\rmW) = \max_{i\ne j} \frac{\abs{\rvw_i^T \rvw_j}}{\norm{\rvw_i}_2 \norm{\rvw_j}_2}.
\end{equation}
Finally, we define $\rvx^{\S}$ and $\rmW_i^{\S}$ as the supported vector and the row-supported matrix according to the set $\S$.

\subsection{Problem Statement} 
\label{sec:statement}

\begin{figure}[t]
    \centering
    \includegraphics[width=1\linewidth]{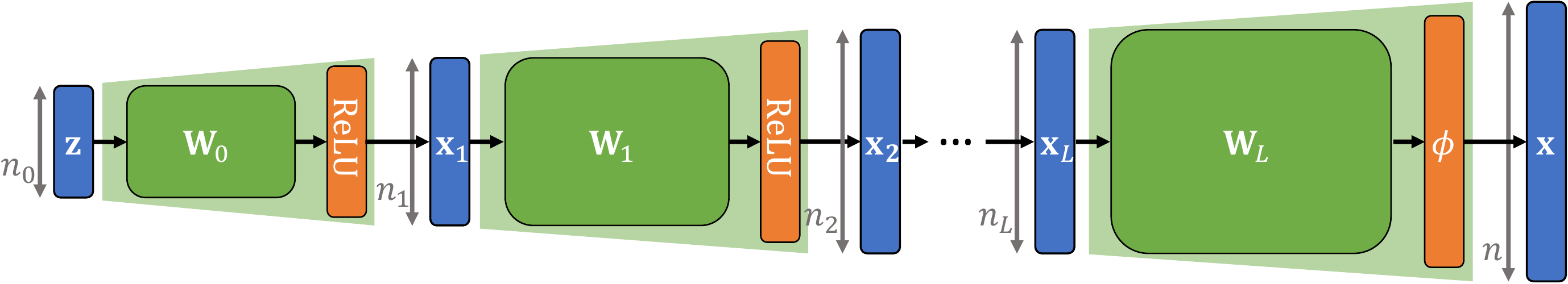}
    \caption{The studied generative model $\rvx = G(\rvz)$.}
    \label{fig:generative_model}
\end{figure}

As depicted in Figure \ref{fig:generative_model}, throughout this work we consider a typical generative scheme $G: \R^{n_0} \rightarrow \R^n$ of the following form:
\begin{equation}\label{eq:generative_model_definition}
\begin{split}
    \rvx_1 & = \relu(\rmW_{0} \rvz), \\
    \rvx_{i+1} & = \relu(\rmW_{i} \rvx_{i}), \text{ for all } i \in \{1, \ldots, L-1\}, \\
    G(\rvz) & = \phi(\rmW_L \rvx_L), 
\end{split}
\end{equation}
where $\rvx_i \in \R^{n_i}$ are the hidden layers, $\rmW_i \in \R^{n_{i+1}\times n_i}$ are the weight matrices, $\rvz \in \R^{n_0}$ is the latent vector that is usually randomly selected from a normal distribution, $\rvz \sim \N(\rvzero, \sigma^2 \rmI_{n_0})$, and $\phi$ is an invertible activation function, e.g. $\tanh$,  sigmoid, or piece-wise linear.

Given a sample $\rvx = G(\rvz)$, that was created by the generative model above, we aim to recover its latent vector $\rvz$. Note that each hidden vector in the model is produced by a $\relu$ activation, leading to hidden layers that are inherently sparse.
This observation supports our approach to study this model utilizing sparse representation theory. In what follows, we use this observation to derive theoretical statements on the invertibility and the stability of this problem, and to develop pursuit algorithms that are guaranteed to restore the original latent vector.

\section{Invertibility and Uniqueness}
\label{sec:uniqueness}

We start by addressing this question: ``Is this generative process invertible?''. In other words, when given a signal that was generated by the model, $\rvx = G(\rvz^*)$, we know that a solution $\rvz^*$ to the inverse problem  exists; however, can we ensure that this is the \emph{only} one?
Theorem \ref{thm:uniqueness} below (its proof is given in Appendix \ref{app:uniqueness}) provides such guarantees, which are based on the sparsity level of the hidden layers and the spark of the weight matrices (see Section \ref{sec:notations}). Importantly, this theorem is not restricted to a specific pursuit algorithm; it can rather be used for any restoration method (gradient descent, deep encoder, etc.) to determine whether the recovered latent vector is the unique solution.

\begin{definition}[$\subspark$]
    Define the $s$-$\subspark$ of a matrix $\rmW$ as the minimal spark of any subset $\S$ of rows of cardinality $|\S| = s$:
    \begin{equation}
        \subspark(\rmW, s) = \min_{|\S|=s} \spark(\rmW^{\S}).
    \end{equation}
\end{definition}

\begin{definition}[$\subrank$]
    Define the $s$-$\subrank$ of a matrix $\rmW$ as the minimal rank over any subset $\S$ of rows of cardinality $|\S| = s$:
    \begin{equation}
        \subrank(\rmW, s) = \min_{|\S|=s} \rank(\rmW^{\S}).
    \end{equation}
\end{definition}

\begin{theorem}[Uniqueness] \label{thm:uniqueness}
    Consider the generative scheme described in \Eqref{eq:generative_model_definition} and a signal $\rvx = G(\rvz^*)$ with a corresponding set of representations $\{\rvx_i^*\}_{i=1}^L$ that satisfy:
    \begin{enumerate}[label=(\roman*)]
        \item $s_L = \norm{\rvx_L^*}_0 < \frac{\spark(\rmW_L)}{2}$.
        \item $s_i = \norm{\rvx_i^*}_0 < \frac{\subspark(\rmW_i, s_{i+1})}{2}$, for all $i \in \{1, \ldots, L-1\}$.
        \item $n_0 = \subrank(\rmW_0, s_1) \le s_1$.
    \end{enumerate}
    Then, $\rvz^*$ is the unique solution to the inverse problem that meets these sparsity conditions.
\end{theorem}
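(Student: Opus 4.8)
The plan is to invert the network one layer at a time, starting from the observation $\rvx$ and propagating uniqueness backward to the latent vector. Let $\rvz'$ be any competing solution of the inverse problem whose hidden representations $\{\rvx_i'\}$ meet the same sparsity conditions, in particular $\norm{\rvx_i'}_0 \le s_i$; the goal is to force $\rvz' = \rvz^*$ by pinning down $\rvx_i' = \rvx_i^*$ layer by layer. The engine of every step is the elementary spark fact: if $\rmA \rvv = \rvzero$ and $\norm{\rvv}_0 < \spark(\rmA)$, then $\rvv = \rvzero$, since every set of fewer than $\spark(\rmA)$ columns of $\rmA$ is linearly independent.

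First I would dispose of the output layer. Because $\phi$ is invertible and acts entrywise, $G(\rvz') = G(\rvz^*) = \rvx$ yields $\rmW_L \rvx_L' = \phi^{-1}(\rvx) = \rmW_L \rvx_L^*$, hence $\rmW_L(\rvx_L^* - \rvx_L') = \rvzero$. The difference is at most $2s_L$-sparse, and condition (i) gives $2s_L < \spark(\rmW_L)$, so $\rvx_L' = \rvx_L^*$. This is the base case, and invertibility of $\phi$ is exactly what lets us reduce the nonlinear output constraint to a linear one.

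Next comes the inductive step, which is where the $\relu$ must be handled and which I expect to be the crux. Suppose $\rvx_{i+1}' = \rvx_{i+1}^*$ for some $i \in \{1,\dots,L-1\}$; in particular they share the support $\S_{i+1} := \Support(\rvx_{i+1}^*)$ with $|\S_{i+1}| = s_{i+1}$. On this support the $\relu$ is active for both vectors, so the pre-activations agree with the (equal) outputs there: $(\rmW_i \rvx_i^*)_{\S_{i+1}} = (\rmW_i \rvx_i')_{\S_{i+1}}$, i.e. $\rmW_i^{\S_{i+1}}(\rvx_i^* - \rvx_i') = \rvzero$. The key point is that knowing the output of a $\relu$ layer pins down where its input was positive, converting the inversion into a sparse linear system restricted to the rows in $\S_{i+1}$. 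Since $\spark(\rmW_i^{\S_{i+1}}) \ge \subspark(\rmW_i, s_{i+1})$ and the difference is at most $2s_i$-sparse, condition (ii) gives $2s_i < \subspark(\rmW_i, s_{i+1}) \le \spark(\rmW_i^{\S_{i+1}})$, forcing $\rvx_i' = \rvx_i^*$. Iterating from $i = L-1$ down to $i = 1$ establishes $\rvx_1' = \rvx_1^*$.

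Finally I would recover the latent vector, and here the argument must change because $\rvz$ is generally dense, so no spark-over-two bound is available on $\rvz^* - \rvz'$. The same $\relu$-support reasoning at the first layer gives $\rmW_0^{\S_1}(\rvz^* - \rvz') = \rvzero$ with $|\S_1| = s_1$. Condition (iii), $\subrank(\rmW_0, s_1) = n_0$, says that every $s_1$-row submatrix of $\rmW_0$ has rank $n_0$, i.e. full column rank, so its kernel is trivial and $\rvz^* - \rvz' = \rvzero$. The compatibility requirement $n_0 \le s_1$ is forced (a matrix with $s_1$ rows can have column rank $n_0$ only if $s_1 \ge n_0$), and it is precisely why the latent layer needs a rank condition rather than a sparsity condition. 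The main obstacles are therefore (a) correctly extracting the linear system from the $\relu$ via its support, and (b) recognizing that density of $\rvz$ forces the switch from a spark argument to a full-column-rank one.
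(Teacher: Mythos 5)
Your proposal is correct and follows essentially the same route as the paper's proof: backward layer-wise induction, using the invertibility of $\phi$ to linearize the output layer, restricting to the rows in the known support $\S_{i+1}$ to turn each $\relu$ layer into a sparse linear system governed by $\spark(\rmW_i^{\S_{i+1}}) \ge \subspark(\rmW_i, s_{i+1})$, and switching to a full-column-rank (trivial kernel) argument via condition (iii) for the dense latent vector. The only cosmetic difference is that you prove the spark-based uniqueness fact directly on the difference vector, whereas the paper invokes it as a cited lemma from sparse representation theory.
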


Theorem \ref{thm:uniqueness} is the first of its kind to provide uniqueness guarantees for general non-statistical weight matrices. Moreover, it only requires an expansion of the layer cardinalities as opposed to \cite{huang2018provably, hand2019global} and \cite{lei2019inverting} that require dimensionality expansion that often does not hold for the last layer (typically $n < n_L$).


In the following corollary, we demonstrate the above theorem for the case of random matrices. We leverage the fact that if $\rmW \in \R^{n\times m}$ ($n\le m$), then the probability of heaving $n$ columns that are linearly dependent is essentially zero \cite[Chapter 2]{elad2010sparse}. In fact, since singular square matrices have Lebesgue measure zero, this corollary holds for almost all set of matrices.

\begin{corollary}[Uniqueness for Random Weight Matrices]
    Assume that the weight matrices comprise of random independent and identically distributed entries (say Gaussian).
    If the representations of a signal $\rvx = G(\rvz^*)$ satisfy:
    \begin{enumerate}[label=(\roman*)]
        \item $s_L = \norm{\rvx_L}_0 < \frac{n + 1}{2}$.
        \item $s_i = \norm{\rvx_i}_0 < \frac{s_{i+1} + 1}{2}$, for all $i \in \{1, \ldots, L-1\}$.
        \item $s_{1} = \norm{\rvx_{1}}_0 \geq n_0$,
    \end{enumerate}
    then, with probability 1 the inverse problem has a unique solution that meets these sparsity conditions.
    \label{cor:random}
\end{corollary}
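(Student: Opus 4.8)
The plan is to obtain the corollary as a specialization of Theorem~\ref{thm:uniqueness}: I will show that, with probability one, the i.i.d.\ random weights force the three abstract quantities $\spark(\rmW_L)$, $\subspark(\rmW_i,s_{i+1})$ and $\subrank(\rmW_0,s_1)$ to attain their maximal generic values, and that under these values the explicit inequalities (i)--(iii) of the corollary imply the corresponding hypotheses of the theorem. The uniqueness conclusion then transfers verbatim.

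The single probabilistic ingredient I would isolate first is the classical fact that a square matrix with i.i.d.\ entries from a continuous distribution is nonsingular almost surely: its determinant is a nonzero polynomial in the entries, whose vanishing set has Lebesgue measure zero. Equivalently, for such a matrix every collection of columns is in ``general position,'' so any $k$ columns living in $\R^{k'}$ with $k \le k'$ are linearly independent almost surely. Because any row-restricted block $\rmW_i^{\S}$ is itself a matrix with i.i.d.\ entries, this statement applies to each such block.

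Next I would read off the generic values. For $\rmW_L\in\R^{n\times n_L}$ with $n\le n_L$, any $n$ columns are independent while any $n+1$ are dependent, so $\spark(\rmW_L)=n+1$ and condition (i) of the theorem becomes $s_L<(n+1)/2$. For a fixed row set with $|\S|=s_{i+1}$, the wide block $\rmW_i^{\S}$ generically has every $s_{i+1}$ columns independent, giving $\spark(\rmW_i^{\S})=s_{i+1}+1$; and for $|\S|=s_1\ge n_0$ the block $\rmW_0^{\S}$ has full column rank $n_0$. The point that needs care---and the only genuine obstacle---is that $\subspark$ and $\subrank$ are \emph{minima} over all admissible row sets $\S$, not the value of one fixed block. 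I would resolve this with a finite union bound: there are only $\binom{n_{i+1}}{s_{i+1}}$ (resp.\ $\binom{n_1}{s_1}$) such subsets, so the event that \emph{some} block fails to be generic is a finite union of measure-zero sets and hence still has probability zero. Consequently $\subspark(\rmW_i,s_{i+1})=s_{i+1}+1$ and $\subrank(\rmW_0,s_1)=n_0$ almost surely.

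Substituting these equalities, condition (i) of the corollary is exactly condition (i) of the theorem, condition (ii) becomes $s_i<(s_{i+1}+1)/2=\subspark(\rmW_i,s_{i+1})/2$, and condition (iii), $s_1\ge n_0$, yields $n_0=\subrank(\rmW_0,s_1)\le s_1$; in the degenerate tall-block regime the theorem's spark and rank conditions are only weaker, so the implication still holds. Thus on an event of probability one the hypotheses of Theorem~\ref{thm:uniqueness} are met, and the inverse problem has a unique solution meeting the sparsity constraints. I would close by noting that the argument used only that the singular locus has measure zero, which is why the statement holds for almost every weight configuration and not merely Gaussian ones.
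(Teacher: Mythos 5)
Your proposal is correct and follows essentially the same route as the paper: the paper justifies the corollary by exactly this genericity argument (singular square submatrices of an i.i.d.\ matrix form a Lebesgue measure-zero set, so the spark and rank quantities in Theorem~\ref{thm:uniqueness} attain their maximal values almost surely), and then reads conditions (i)--(iii) as the specialization of the theorem's hypotheses. Your explicit finite union bound over the $\binom{n_{i+1}}{s_{i+1}}$ row subsets, needed because $\subspark$ and $\subrank$ are minima over all such subsets, is a detail the paper glosses over but is the same argument made precise.
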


The above corollary states that the number of nonzero elements should expand by a factor of at least $2$ between layers to ensure a unique global minimum of the inverse problem.\footnote{In many practical architectures, the last non-linear activation, e.g. $\tanh$, does not promote sparsity, enabling a decrease in the dimensionality of the image compared to the last representation vector (i.e. we allow for $n_L>n$).} As presented in Section \ref{sec:experiments_random}, these conditions are very effective in predicting whether the generative process is invertible or not, regardless of the recovery algorithm used.

\section{Pursuit Guarantees}

In this section we provide an inversion algorithm supported by reconstruction guarantees for the noiseless and noisy settings. To reveal the potential of our approach, we first discuss the performance of an Oracle, in which the true supports of all the hidden layers are known, and only their values are missing. This estimation, which is described in Algorithm \ref{alg:oracle}, is performed by a sequence of simple linear projections on the known supports starting by estimating $\rvx_L$ and ending with the estimation of $\rvz$. Note that already in the first step of estimating $\rvx_L$, we can realize the advantage of utilizing the inherent sparsity of the hidden layers. Here, the reconstruction error of the Oracle is proportional to $s_L = \norm{\rvx_L}_0$, whereas solving a least square problem, as suggested in \cite{lei2019inverting}, results with an error that is proportional to $n_L$. More details on this estimator appears in Appendix \ref{app:oracle}.

\begin{algorithm}[H]
\caption{The Oracle} \label{alg:oracle}
\textbf{Input:} $\rvy = G(\rvz) + \rve \in \R^n$, and \emph{supports of each layer} $\{\S_i\}_{i=1}^L$.\\
\textbf{Algorithm:} Set $\rvz \gets \argmin_{\rvz} \frac{1}{2}\norm{\rvy - \phi\left( \left(\prod_{i=L}^{0} \rmW_i^{\hS_{i+1}}\right) \rvz \right)}_2^2$.
\end{algorithm}

In what follows, we propose to invert the model by solving sparse coding problems layer-by-layer, while leveraging the sparsity of all the intermediate feature vectors. Specifically, Algorithm \ref{alg:layered_bp} describes a layered Basis-Pursuit approach, and Theorem \ref{thm:layered_bp} provides reconstruction guarantees for this algorithm. The proof of this theorem is given in Appendix \ref{app:layered_bp}. In Corollary \ref{cor:realizable} we provide guarantees for this algorithm when inverting non-random generative models in the noiseless case.

\begin{algorithm}[H]
\caption{Layered Basis-Pursuit} \label{alg:layered_bp}
\textbf{Input:} $\rvy = G(\rvz) + \rve \in \R^n$, where $\norm{\rve}_2\leq \epsilon$, and sparsity levels $\{s_i\}_{i=1}^L$.\\
\textbf{First step:} $\hrvx_L = \argmin_\rvx ~ \frac{1}{2}\norm{\phi^{-1}(\rvy) - \rmW_L \rvx}_2^2 + \lambda_L \norm{\rvx}_1$, \,with $\lambda_L = 2 \ell \epsilon$.\\
Set $\hS_L = \Support(\hrvx_L)$ and $\epsilon_L = \frac{(3+\sqrt{1.5}) \sqrt{s_L}}{\min_j \norm{\rvw_{L,j}}_2} \ell \epsilon$.\\
\textbf{General step:} For any layer $i =L-1,\ldots,1$ execute the following:
\begin{enumerate}
    \item $\hrvx_i = \argmin_\rvx ~ \frac{1}{2}\norm{\hrvx_{i+1}^{\hS_{i+1}} - \rmW_i^{\hS_{i+1}} \rvx}_2^2 + \lambda_i \norm{\rvx}_1$, \,with $\lambda_i = 2 \epsilon_{i+1}$.
    \item Set $\hS_i = \Support(\hrvx_i)$ and $\epsilon_i = \frac{(3+\sqrt{1.5}) \sqrt{s_i}}{\min_j \norm{\rvw_{i,j}^{\hS_{i+1}}}_2} \epsilon_{i+1}$.
\end{enumerate}
\textbf{Final step:} Set $\hrvz = \argmin_\rvz ~ \frac{1}{2}\norm{\hrvx_1^{\hS_1} - \rmW_0^{\hS_1} \rvz}_2^2$.
\end{algorithm}

\begin{definition}[Mutual Coherence of Submatrix]
    Define $\mu_s(\rmW)$ as the maximal mutual coherent of any submatrix of $\rmW$ with $s$ rows:
    \begin{equation}
        \mu_s(\rmW) = \max_{|\S|=s} \mu(\rmW^{\S}).
    \end{equation}
\end{definition}

\begin{theorem}[Layered Basis-Pursuit Stability] \label{thm:layered_bp}
    Suppose that $\rvy = \rvx + \rve$, where $\rvx = G(\rvz)$ is an unknown signal with known sparsity levels $\{s_i\}_{i=1}^L$, and $\norm{\rve}_2 \leq \epsilon$. Let $\ell$ be the Lipschitz constant of $\phi^{-1}$ and define $\epsilon_{L+1} = \ell \epsilon$. If in each midlayer $i \in \{1,\ldots,L\}$ the sparsity level satisfies $s_i < \frac{1}{3\mu_{s_{i+1}}(\rmW_i)}$, then,
    \begin{itemize}
        \item The support of $\hrvx_i$ is a subset of the true support, $\hS_i \subseteq \S_i$;
        \item The vector $\hrvx_i$ is the unique solution for the basis-pursuit;
        \item The midlayer's error satisfies $\norm{\hrvx_i - \rvx_i}_{2} < \epsilon_i$.
    \end{itemize}
    In addition, denoting $\varphi = \lmin((\rmW_0^{\hS_1})^T \rmW_0^{\hS_1}) > 0$, the recovery error on the latent space is upper bounded by
    \begin{equation}
        \norm{\hrvz - \rvz}_{2} < \frac{\epsilon \ell}{\sqrt{\varphi}} \prod_{i=1}^L \frac{(3 + \sqrt{1.5}) \sqrt{s_j}}{\min_j \norm{\rvw^{\hS_{i+1}}_{i, j}}_2}.
    \end{equation}
\end{theorem}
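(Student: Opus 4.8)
The plan is to prove the three bullet claims by a downward induction on the layer index $i=L,L-1,\ldots,1$, and then to treat the final least-squares step separately. The engine of the whole argument is a single classical stability result for basis pursuit from sparse representation theory: if $\rvb = \rmD\rva_0 + \rvn$ with $\norm{\rvn}_2\le\delta$, $\rva_0$ is $s$-sparse, and $s<\frac{1}{3\mu(\rmD)}$, then the minimizer of $\frac12\norm{\rvb-\rmD\rva}_2^2+\lambda\norm{\rva}_1$ with $\lambda=2\delta$ is unique, its support is contained in $\Support(\rva_0)$, and $\norm{\hat\rva-\rva_0}_2\le\frac{(3+\sqrt{1.5})\sqrt s}{\min_j\norm{\rvd_j}_2}\,\delta$. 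I would state this as a lemma (the $\min_j\norm{\rvd_j}_2$ factor arising from pulling the column norms of the unnormalized dictionary out of the standard unit-norm version). The theorem then follows by checking that every step of Algorithm~\ref{alg:layered_bp} is an instance of this lemma with a correctly propagated noise level.

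For the base case I would first linearize the output nonlinearity: since $\rvx=G(\rvz)=\phi(\rmW_L\rvx_L)$ and $\phi^{-1}$ is $\ell$-Lipschitz, $\norm{\phi^{-1}(\rvy)-\rmW_L\rvx_L}_2=\norm{\phi^{-1}(\rvy)-\phi^{-1}(\rvx)}_2\le\ell\norm{\rve}_2\le\ell\epsilon=\epsilon_{L+1}$. Thus $\phi^{-1}(\rvy)=\rmW_L\rvx_L+\tilde\rve$ is a noisy sparse-coding problem in the dictionary $\rmW_L$ with noise $\epsilon_{L+1}$ and sparsity $s_L$, and the lemma (with $\lambda_L=2\epsilon_{L+1}$) yields $\hS_L\subseteq\S_L$, uniqueness, and $\norm{\hrvx_L-\rvx_L}_2<\epsilon_L$. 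The inductive step rests on a ReLU-linearization observation: on the true support $\S_{i+1}$ the pre-activations are strictly positive, so $\relu$ acts as the identity and $\rvx_{i+1}^{\S_{i+1}}=\rmW_i^{\S_{i+1}}\rvx_i$; since the inductive hypothesis gives $\hS_{i+1}\subseteq\S_{i+1}$, the same identity holds on $\hS_{i+1}$, i.e. $\rvx_{i+1}^{\hS_{i+1}}=\rmW_i^{\hS_{i+1}}\rvx_i$. Hence the $i$-th problem in the algorithm, with measurement $\hrvx_{i+1}^{\hS_{i+1}}$ and dictionary $\rmW_i^{\hS_{i+1}}$, is again a noisy sparse-coding problem for the $s_i$-sparse vector $\rvx_i$, whose effective noise $\hrvx_{i+1}^{\hS_{i+1}}-\rvx_{i+1}^{\hS_{i+1}}$ has norm at most $\norm{\hrvx_{i+1}-\rvx_{i+1}}_2<\epsilon_{i+1}$. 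Applying the lemma with $\lambda_i=2\epsilon_{i+1}$ reproduces the three bullets at level $i$ and the recursion $\epsilon_i=\frac{(3+\sqrt{1.5})\sqrt{s_i}}{\min_j\norm{\rvw_{i,j}^{\hS_{i+1}}}_2}\,\epsilon_{i+1}$.

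It remains to convert the layer-$1$ estimate into a latent-space bound. Because $\hS_1\subseteq\S_1$, the same linearization gives $\rvx_1^{\hS_1}=\rmW_0^{\hS_1}\rvz$ exactly, so $\rvz$ is the true solution of the consistent system solved in the final step, while $\hrvz$ solves the same least-squares problem with the perturbed right-hand side $\hrvx_1^{\hS_1}$. With $\varphi=\lmin\big((\rmW_0^{\hS_1})^T\rmW_0^{\hS_1}\big)>0$ the matrix $\rmW_0^{\hS_1}$ has full column rank, so $\hrvz-\rvz=(\rmW_0^{\hS_1})^\dagger(\hrvx_1^{\hS_1}-\rvx_1^{\hS_1})$ and $\norm{\hrvz-\rvz}_2\le\frac{1}{\sqrt\varphi}\norm{\hrvx_1-\rvx_1}_2<\epsilon_1/\sqrt\varphi$. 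Unfolding the recursion for $\epsilon_i$ down from $\epsilon_{L+1}=\ell\epsilon$ then gives the claimed product bound.

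The step I expect to be the main obstacle is the coherence bookkeeping in the inductive step: the lemma requires $s_i<\frac{1}{3\mu(\rmW_i^{\hS_{i+1}})}$ for the \emph{actually used} dictionary $\rmW_i^{\hS_{i+1}}$, whereas the hypothesis only controls $\mu_{s_{i+1}}(\rmW_i)$, i.e. coherence over row subsets of size exactly $s_{i+1}$. Since the guarantee yields only $\hS_{i+1}\subseteq\S_{i+1}$ and mutual coherence is not monotone under deletion of rows (removing rows can strictly increase coherence), the bound $\mu(\rmW_i^{\hS_{i+1}})\le\mu_{s_{i+1}}(\rmW_i)$ is not automatic when $\hS_{i+1}$ is a proper subset. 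I would close this gap either by controlling the relevant sub-coherence over all subsets of size at most $s_{i+1}$, or by carrying the analysis on the true support $\S_{i+1}$ and transferring the conclusion to $\hS_{i+1}$; the remaining arithmetic (propagating $\lambda_i$, the $\min_j$ column-norm factors, and the constant $3+\sqrt{1.5}$) is routine once this coherence control is in place.
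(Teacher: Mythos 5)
Your proposal is correct and follows essentially the same route as the paper's proof: the same basis-pursuit stability lemma of Tropp, the same column-normalization decomposition $\rmW = \trmW\rmD$ to handle unnormalized dictionaries (producing the $\min_j\norm{\rvw_j}_2$ factor and the $\sqrt{s}$ conversion from the $\ell_\infty$ to the $\ell_2$ bound via support inclusion), the same downward layer-wise recursion with $\lambda_i = 2\epsilon_{i+1}$, and the same full-column-rank least-squares argument giving the $1/\sqrt{\varphi}$ factor in the latent step. The one point worth dwelling on is the obstacle you flag at the end, because it is real and the paper does not resolve it: at the corresponding step the paper's proof simply asserts that ``according to Theorem assumptions, the mutual coherence condition holds'' when applying the lemma to the dictionary $\rmW_{L-1}^{\hS_L}$, ignoring exactly the issue you raise. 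Since the lemma only guarantees $\hS_{i+1}\subseteq\S_{i+1}$, the dictionary actually used, $\rmW_i^{\hS_{i+1}}$, may have strictly fewer than $s_{i+1}$ rows, and the quantity $\mu_{s_{i+1}}(\rmW_i)$, defined as a maximum over row subsets of cardinality \emph{exactly} $s_{i+1}$, does not dominate $\mu(\rmW_i^{\hS_{i+1}})$ in that case, precisely because mutual coherence can increase under row deletion. Your first suggested fix --- replacing $\mu_s$ by the maximum of $\mu(\rmW^{\S})$ over all subsets of cardinality \emph{at most} $s$ and imposing the sparsity condition with respect to that quantity --- is exactly the patch needed to make both your argument and the paper's airtight, and it changes nothing in the spirit of the statement. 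In short, your proof is at least as complete as the published one, and the ``main obstacle'' you identify is a genuine gap in the paper's own argument rather than in yours.
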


\begin{corollary}[Layered Basis-Pursuit -- Noiseless Case]
\label{cor:realizable}
    Let $\rvx = G(\rvz)$ with sparsity levels $\{s_i\}_{i=1}^L$, and assume that $s_i < 1/3\mu_{s_{i+1}}(\rmW_i)$ for all $i \in \{1,\ldots,L\}$, and that $\varphi = \lmin((\rmW_0^{\hS_1})^T \rmW_0^{\hS_1}) > 0$. Then Algorithm \ref{alg:layered_bp} recovers the latent vector $\hrvz = \rvz$ perfectly.
\end{corollary}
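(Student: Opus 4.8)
The plan is to obtain this corollary as the noiseless specialization ($\epsilon = 0$) of Theorem~\ref{thm:layered_bp}, while carefully handling the fact that the theorem's error bounds are strict inequalities that cannot be read off at $\epsilon = 0$ directly. First I would track the error constants through the recursion. Setting $\epsilon = 0$ gives $\epsilon_{L+1} = \ell\epsilon = 0$, and since each $\epsilon_i = \frac{(3+\sqrt{1.5})\sqrt{s_i}}{\min_j \norm{\rvw_{i,j}^{\hS_{i+1}}}_2}\,\epsilon_{i+1}$ is a fixed positive multiple of $\epsilon_{i+1}$, a downward induction forces $\epsilon_i = 0$ for every layer, and the final latent-error bound $\frac{\epsilon\ell}{\sqrt{\varphi}}\prod_{i=1}^{L}(\cdots)$ collapses to $0$. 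Morally this already says $\hrvz = \rvz$; the remaining work is to make this rigorous despite the degeneracy of the subproblems.

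The main subtlety — and the step I expect to require the most care — is that at $\epsilon = 0$ the regularization parameter $\lambda_i = 2\epsilon_{i+1}$ vanishes, so each layerwise subproblem degenerates into (the $\lambda \to 0^+$ limit of) the equality-constrained basis pursuit, and the theorem's strict bounds $\norm{\hrvx_i - \rvx_i}_2 < \epsilon_i = 0$ become vacuous rather than informative. To resolve this I would re-examine the inductive step of the theorem's proof in the exact-data regime and invoke the classical exact-recovery guarantee for basis pursuit: the assumed coherence condition $s_i < 1/\bigl(3\mu_{s_{i+1}}(\rmW_i)\bigr)$ implies the weaker $s_i < \tfrac12\bigl(1 + 1/\mu_{s_{i+1}}(\rmW_i)\bigr)$, under which the true sparse vector is the \emph{unique} minimizer of the basis pursuit on $\rmW_i^{\hS_{i+1}}$. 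Hence each subproblem returns $\hrvx_i = \rvx_i$ exactly, with equality rather than merely within $\epsilon_i$.

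With exact recovery available at each layer, I would run the induction from the top layer downward. In the first step $\phi^{-1}(\rvy) = \rmW_L\rvx_L$ exactly, so basis pursuit yields $\hrvx_L = \rvx_L$ and therefore $\hS_L = \Support(\hrvx_L) = \S_L$. For the inductive step, assuming $\hrvx_{i+1} = \rvx_{i+1}$ and $\hS_{i+1} = \S_{i+1}$, I would use that on the true support the $\relu$ acts as the identity, so $\hrvx_{i+1}^{\hS_{i+1}} = \rvx_{i+1}^{\S_{i+1}} = \rmW_i^{\S_{i+1}}\rvx_i$; the layer-$i$ subproblem thus has exact data and recovers $\hrvx_i = \rvx_i$, $\hS_i = \S_i$. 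Finally, the last least-squares step operates on $\hrvx_1^{\hS_1} = \rvx_1^{\S_1} = \rmW_0^{\S_1}\rvz$, and since $\varphi = \lmin\bigl((\rmW_0^{\hS_1})^T\rmW_0^{\hS_1}\bigr) > 0$ guarantees that $\rmW_0^{\hS_1}$ has full column rank, the normal equations admit the unique solution $\hrvz = \bigl((\rmW_0^{\hS_1})^T\rmW_0^{\hS_1}\bigr)^{-1}(\rmW_0^{\hS_1})^T\rmW_0^{\hS_1}\rvz = \rvz$, which completes the argument.
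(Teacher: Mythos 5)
Your proposal is correct, and it is in fact more careful than what the paper offers: the paper gives no separate proof of Corollary~\ref{cor:realizable} at all, presenting it as the immediate $\epsilon=0$ specialization of Theorem~\ref{thm:layered_bp}. You correctly spotted that this specialization is degenerate as stated: with $\epsilon=0$ the algorithm's parameters become $\lambda_i = 2\epsilon_{i+1}=0$, so each subproblem in Algorithm~\ref{alg:layered_bp} is unregularized least squares over an underdetermined system (whose minimizer is neither unique nor sparse), and the theorem's strict bounds $\norm{\hrvx_i-\rvx_i}_2 < \epsilon_i = 0$ cannot literally hold. Your repair --- interpreting the $\lambda_i\to 0^+$ problem as equality-constrained basis pursuit, noting that $s_i < 1/(3\mu_{s_{i+1}}(\rmW_i))$ implies the classical exact-recovery threshold $s_i < \tfrac{1}{2}(1+1/\mu_{s_{i+1}}(\rmW_i))$, and running a top-down induction in which $\hrvx_i = \rvx_i$ and $\hS_i = \S_i$ hold with equality (so that $\hrvx_{i+1}^{\hS_{i+1}} = \rmW_i^{\S_{i+1}}\rvx_i$ gives exact data at the next layer, and $\varphi>0$ closes the final least-squares step) --- is exactly what is needed to make the corollary rigorous, and it buys a stronger conclusion than the theorem's support \emph{inclusion}, namely exact support identification at every layer. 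One caveat you inherit from the paper rather than introduce: the coherence-based exact-recovery guarantee for basis pursuit is classically stated for unit-norm columns, and for unnormalized $\rmW_i^{\S_{i+1}}$ the $\ell_1$ objective corresponds to a \emph{weighted} $\ell_1$ problem in the normalized frame; the paper's own proof of Theorem~\ref{thm:layered_bp} glosses over the same point, so your argument is at the paper's level of rigor there, but a fully airtight version would either assume normalized columns or invoke a scaling-robust recovery condition.
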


\section{The Latent-Pursuit Algorithm}

While Algorithm \ref{alg:layered_bp} provably inverts the generative model, it only uses the non-zero elements $\rvx_{i+1}^{\hS_{i+1}}$ to estimate the previous layer $\rvx_{i}$. Here we present another algorithm, the Latent-Pursuit algorithm, which expands the Layered Basis-Pursuit algorithm by imposing two additional constraints. First, the Latent-Pursuit uses not only the non-zero elements of the subsequent layer, but the zero entries as well. These turn into inequality constraints, $\rmW_i^{\S_{i+1}^c} \rvx_i \leq \rvzero$, that emerge from the $\relu$ activation. Second, recall that the $\relu$ activation constrains the midlayers to have nonnegative values, $\rvx_i\ge 0$. Finally, we refrain from applying the inverse activation function $\phi^{-1}$ on the signal since in practical cases, such as $\tanh$, this inversion might be unstable. The proposed algorithm is composed of three parts: (i) the image layer; (ii) the middle layers; and (iii) the first layer. In what follows we describe each of these steps.

\begin{figure}[t]
    \centering
    \includegraphics[width=1\textwidth]{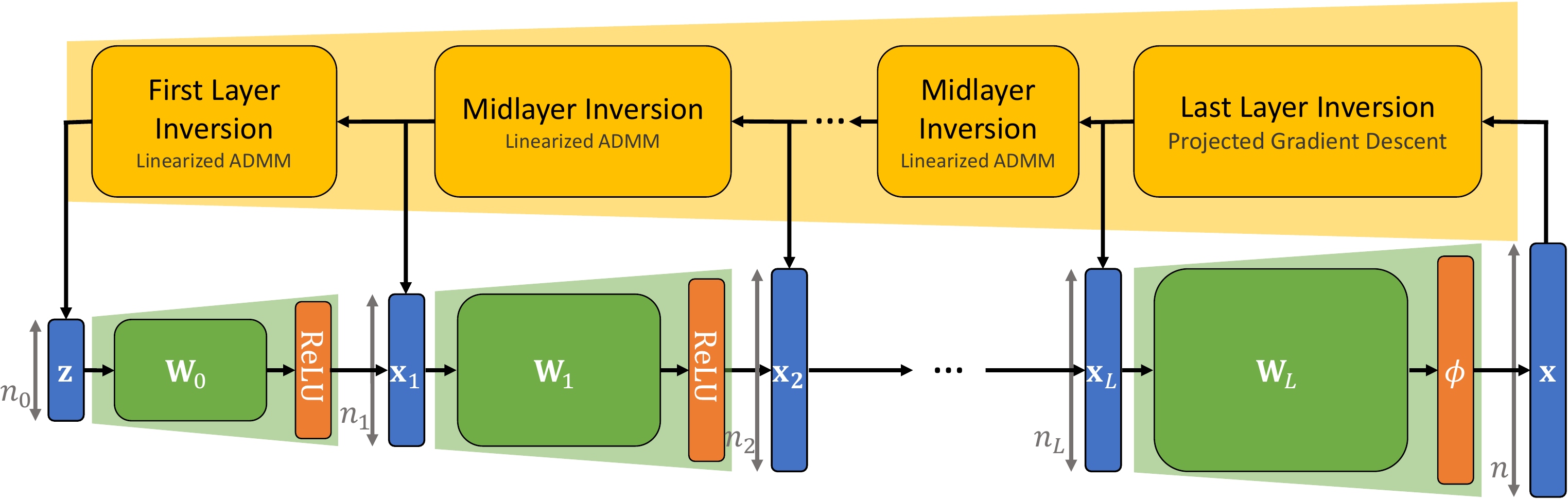}
    \caption{The Latent-Pursuit inverts the generative model layer-by-layer as described in Algorithm \ref{alg:latent_pursuit}. This recovery algorithm is composed of three steps: (1) last layer inversion by Algorithm \ref{alg:last_layer}; (2) midlayer inversions using Algorithm \ref{alg:midlaye_pursuit}; and (3) first layer inversion via Algorithm \ref{alg:midlaye_pursuit} with the $\rvx$-step detailed in \Eqref{eq:ADMM_update_x}.}
    \label{fig:latent_pursuit}
\end{figure}

We start our description of the algorithm with the inversion of the last layer, i.e. the image layer. Here we need to solve
\begin{equation}\label{eq:last_layer}
    \rvx_L = \argmin_{\rvx} \frac{1}{2}\norm{\rvy - \phi(\rmW_L \rvx)}_2^2 + \lambda_L \rvone^T \rvx, ~\st~ \rvx \geq \rvzero,
\end{equation}
where $1^T \rvx$ represents an $\ell_1$ regularization term under the nonnegative constraint. Assuming that $\phi$ is smooth and strictly monotonic increasing, this problem is a smooth convex function with separable constraints, and therefore, it can be solved using a projected gradient descent algorithm. In particular, in Algorithm \ref{alg:last_layer} we solve this problem using FISTA (Nesterov's acceleration) \cite{beck2009fast}.

\begin{algorithm}[H]
\caption{Latent Pursuit: Last Layer Inversion} \label{alg:last_layer}
\textbf{Input:} $\rvy\in\sR^n, K\in\sN, \lambda_L \geq 0, \mu\in (0, \frac{2}{\ell})$, where $\phi(\cdot)$ is $\ell$-smooth and strictly monotonic increasing.\\
\textbf{Initialization:} $\rvu^{(0)} \gets \rvzero, \rvx_L^{(0)} \gets \rvzero, t^{(0)}\gets 1$.\\
\textbf{General step:} for any $k=0,1,\ldots, K$ execute the following: 
\begin{enumerate}
    \item $\rvg \gets \rmW_L^T \phi'\left(\rmW_L \rvx_L^{(k)}\right)  \left[\phi\left(\rmW_L \rvx_L^{(k)}\right) - \rvy \right]$.
    \item $\rvu^{(k+1)} \gets \relu\left(\rvx_L^{(k)} - \mu\cdot(\rvg + \lambda_L \rvone)\right)$
    \item $t^{(k+1)} \gets \frac{1 + \sqrt{1 + 4 t^{(k)^2}}}{2}$
    \item $\rvx_L^{(k+1)} \gets \rvu^{(k+1)} + \frac{t^{(k)} - 1}{t^{(k+1)}} (\rvu^{(k+1)} - \rvu^{(k)})$
\end{enumerate}
\textbf{Return:} $\rvx_L^{(K)}$
\end{algorithm}

We move on to the middle layers, i.e. estimating $\rvx_i$ for $i\in\{1,\dots,L-1\}$. Here, both the approximated vector and the given signal are assumed to result from a $\relu$ activation function. This leads us to the following problem:
\begin{equation}
    \rvx_i = \argmin_{\rvx} \frac{1}{2}\norm{\rvx_{i+1}^{\hS} - \rmW_i^{\hS} \rvx}_2^2 + \lambda_i \rvone^T \rvx, ~\st~ \rvx \geq \rvzero,~ \rmW_i^{\hS^c} \rvx \leq \rvzero
\end{equation}
where $\hS = \hS_{i+1}$ is the support of the output of the layer to be inverted, and $\hS^c = \hS_{i+1}^c$ is its complementary. To solve this problem we introduce an auxiliary variable $\rva = \rmW_i^{\S^c} \rvx$, leading to the following augmented Lagrangian form:
\begin{equation} \label{eq:objective_midlayer}
\begin{split}
    \min_{\rvx, \rva, \rvu} ~ & \frac{1}{2}\norm{\rvx_{i+1}^{\S} - \rmW_i^{\S} \rvx}_2^2 + \lambda_i \rvone^T \rvx + \frac{\rho_i}{2}\norm{\rva - \rmW_i^{\S^c} \rvx +\rvu}_2^2 \\
    \st ~ & \rvx \geq \rvzero,~ \rva \leq \rvzero.
\end{split}
\end{equation}
This optimization problem could be solved using ADMM (alternating direction method of multipliers) \cite{boyd2011distributed}, however, it would require inverting a matrix of size $n_i \times n_i$, 
which might be costly. Alternatively, we employ a more general method, called alternating direction \emph{proximal} method of multipliers \cite[Chapter 15]{beck2017first}, in which a quadratic proximity term, $\frac{1}{2}\norm{\rvx-\rvx^{(k)}}_{\rmQ}$, is added to the objective function (\Eqref{eq:objective_midlayer}). By setting 
\begin{equation}
    \rmQ = \alpha\rmI - {\rmW_i^{\S}}^T \rmW_i^{\S} + \beta \rmI - \rho_i {\rmW_i^{\S^c}}^T \rmW_i^{\S^c},
\end{equation}
with
\begin{equation}\label{eq:alpha_beta_cond}
    \alpha + \beta \geq \lmax({\rmW_i^{\S}}^T \rmW_i^{\S} + \rho_i {\rmW_i^{\S^c}}^T \rmW_i^{\S^c}),
\end{equation}
we get that $\rmQ$ is a positive semidefinite matrix. This leads to an algorithm that alternates through the following steps:
\begin{eqnarray}
    \rvx^{(k+1)} &\gets& \argmin_{\rvx} \frac{\alpha}{2} \norm{\rvx - \left( \rvx^{(k)} - \frac{1}{\alpha} {\rmW_i^{\S}}^T \left( \rmW_i^{\S} \rvx^{(k)} - \rvx_{i+1}^{\S} \right) \right) }_2^2 + \lambda_i \rvx + \nonumber\\
    & & \qquad \frac{\beta}{2} \norm{\rvx - \left( \rvx^{(k)} - \frac{\rho_i}{\beta} {\rmW_i^{\S^c}}^T \left( \rmW_i^{\S^c} \rvx^{(k)} - \rva^{(k)} - \rvu^{(k)} \right) \right) }_2^2 \label{eq:ADMM_update_x} \\
    & & \st \quad \rvx \geq \rvzero. \nonumber\\
    \rva^{(k+1)} &\gets& \argmin_{\rva} \frac{\rho_i}{2}\norm{\rva - \rmW_i^{\S^c} \rvx^{(k+1)} +\rvu^{(k)}}_2^2, ~ \st ~ \rva \leq \rvzero. \\
    \rvu^{(k+1)} &\gets& \rvu^{(k)} + \left( \rva^{(k+1)} - \rmW_i^{\S^c} \rvx^{(k+1)} \right).
\end{eqnarray}
Thus, the Linearized-ADMM algorithm, described in \ref{alg:midlaye_pursuit} is guaranteed to converge to the optimal solution of \Eqref{eq:objective_midlayer}.

\begin{algorithm}[H]
\caption{Latent Pursuit: Midlayer Inverse Problem Algorithm} \label{alg:midlaye_pursuit}
\textbf{Initialization:} $\rvx^{(0)} \in \R^{n_i}$, \,$\rvu^{(0)}, \rva^{(0)} \in \R^{s_{i+1}}$, \,$\rho_i > 0$, and $\alpha, \beta$ satisfying \Eqref{eq:alpha_beta_cond}.\\
\textbf{General step:} for any $k=0,1,\ldots$ execute the following:
\begin{enumerate}
    \item  $\rvx^{(k+1)} \gets \relu\big[ \rvx^{(k)} -  \frac{1}{\alpha + \beta} {\rmW_i^{\S}}^T ( \rmW_i^{\S} \rvx^{(k)} - \rvx_{i+1}^{\S} ) \\
    \qquad ~ \qquad ~ \qquad ~ \qquad ~ \qquad ~ \qquad
    - \frac{\rho_i}{\alpha + \beta} {\rmW_i^{\S^c}}^T ( \rmW_i^{\S^c} \rvx^{(k)} - \rva^{(k)} - \rvu^{(k)} ) - \frac{\lambda_i}{\alpha + \beta} \big].$
    \item $\rva^{(k+1)} \gets - \relu \left[ \rvu^{(k)} - \rmW_i^{\S^c} \rvx^{(k+1)} \right]$.
    \item $\rvu^{(k+1)} \gets \rvu^{(k)} + \rva^{(k+1)} - \rmW_i^{\S^c} \rvx^{(k+1)}$.
\end{enumerate}
\end{algorithm}

We now recovered all the hidden layers, and only the latent vector $\rvz$ is left to be estimated. For this inversion step we adopt a MAP estimator utilizing the fact that $\rvz$ is drawn from a normal distribution:
\begin{equation}
    \rvz = \argmin_{\rvz} \frac{1}{2}\norm{\rvx_1^{\S} - \rmW_{0}^{\S} \rvz}_2^2 + \frac{\gamma}{2}\norm{\rvz}_2^2, ~\st~ \rmW_{0}^{\S^c} \rvz \leq \rvzero,
\end{equation}
with $\gamma > 0$. This problem can be solved by the Linearized-ADMM algorithm described above, expect for the update of $\rvx$ (\Eqref{eq:ADMM_update_x}), which becomes:
\begin{equation} \label{eq:z_step}
\begin{split}
    \rvz^{(k+1)} \gets  & \argmin_{\rvz} \frac{\alpha}{2} \norm{\rvz - \left( \rvz^{(k)} - \frac{1}{\alpha} {\rmW_{0}^{\S}}^T \left( \rmW_{0}^{\S} \rvz^{(k)} - \rvx_{1}^{\S} \right) \right) }_2^2 + \\
    & \quad \frac{\beta}{2} \norm{\rvz - \left( \rvz^{(k)} - \frac{\rho_i}{\beta} {\rmW_{0}^{\S^c}}^T \left( \rmW_{0}^{\S^c} \rvz^{(k)} - \rva^{(k)} - \rvu^{(k)} \right) \right) }_2^2 + \frac{\gamma}{2}\norm{\rvz}_2^2.
\end{split}
\end{equation}
Equivalently, for the latent vector $\rvz$, the first step of Algorithm \ref{alg:midlaye_pursuit} is changed to to:
\begin{multline}
    \rvz^{(k+1)} \gets
    \frac{1}{\alpha + \beta + \gamma} \Big( (\alpha + \beta) \rvz^{(k)} - {\rmW_0^{\S}}^T ( \rmW_0^{\S} \rvz^{(k)} - \rvx_1^{\S} ) \\ - \rho_1 {\rmW_0^{\S^c}}^T ( \rmW_0^{\S^c} \rvz^{(k)} - \rva^{(k)} - \rvu^{(k)} ) \Big).
\end{multline}

Once the latent vector, $\rvz$, and all the hidden layers $\{\rvx_i\}_{i=1}^L$ are recovered, we propose an optional step to improve the final estimation. In this step, which we refer to as debiasing, we freeze the recovered supports and only optimize over the non-zero values in an end-to-end fashion. This is equivalent to computing the Oracle, only here the supports are not known, but rather estimated using the proposed pursuit. Algorithm \ref{alg:latent_pursuit} provides a short description of the entire proposed inversion method. 

\begin{algorithm}[H]
\caption{The Latent-Pursuit Algorithm} \label{alg:latent_pursuit}
\textbf{Initialization:} Set $\lambda_i > 0$ and $\rho_i > 0$.\\
\textbf{First step:} Estimate $\rvx_L$, i.e. solve \Eqref{eq:last_layer} using Algorithm \ref{eq:last_layer}.\\
\textbf{General step:} For any layer $i=L-1,\ldots, 1$, estimate $\rvx_i$ using Algorithm \ref{alg:midlaye_pursuit}.\\
\textbf{Final step:} Estimate $\rvz$ using Algorithm \ref{alg:midlaye_pursuit} but with the $x$-step described in \Eqref{eq:z_step}.\\
\textbf{Debiasing (optional):} Set $\rvz \gets \argmin_{\rvz} \frac{1}{2}\norm{\rvy - \phi\left( \left(\prod_{i=L}^{0} \rmW_i^{\hS_{i+1}}\right) \rvz \right)}_2^2$.
\end{algorithm}

\section{Numerical Experiments}
\label{sec:experiments}

We demonstrate the effectiveness of our approach through numerical experiments, where our goal is twofold. First we study random generative models and show the ability of the uniqueness claim above (Corollary \ref{cor:random}) to predict when both gradient descent and Latent-Pursuit algorithms fail to invert $G$ as there exists more than one solution to the inversion task. In addition, we show that in these random networks and under the conditions of Corollary \ref{cor:realizable}, the latent vector is perfectly recovered by both the Layered Basis-Pursuit and the Latent-Pursuit algorithm. Our second goal is to display the advantage of the Latent-Pursuit over the gradient descent alternative for trained generative models, in two settings: noiseless and image inpainting.

\subsection{Random Weights}
\label{sec:experiments_random}

\begin{figure}[t]
\centering
\begin{subfigure}{0.32\textwidth}
\centering
    \includegraphics[width=1\linewidth]{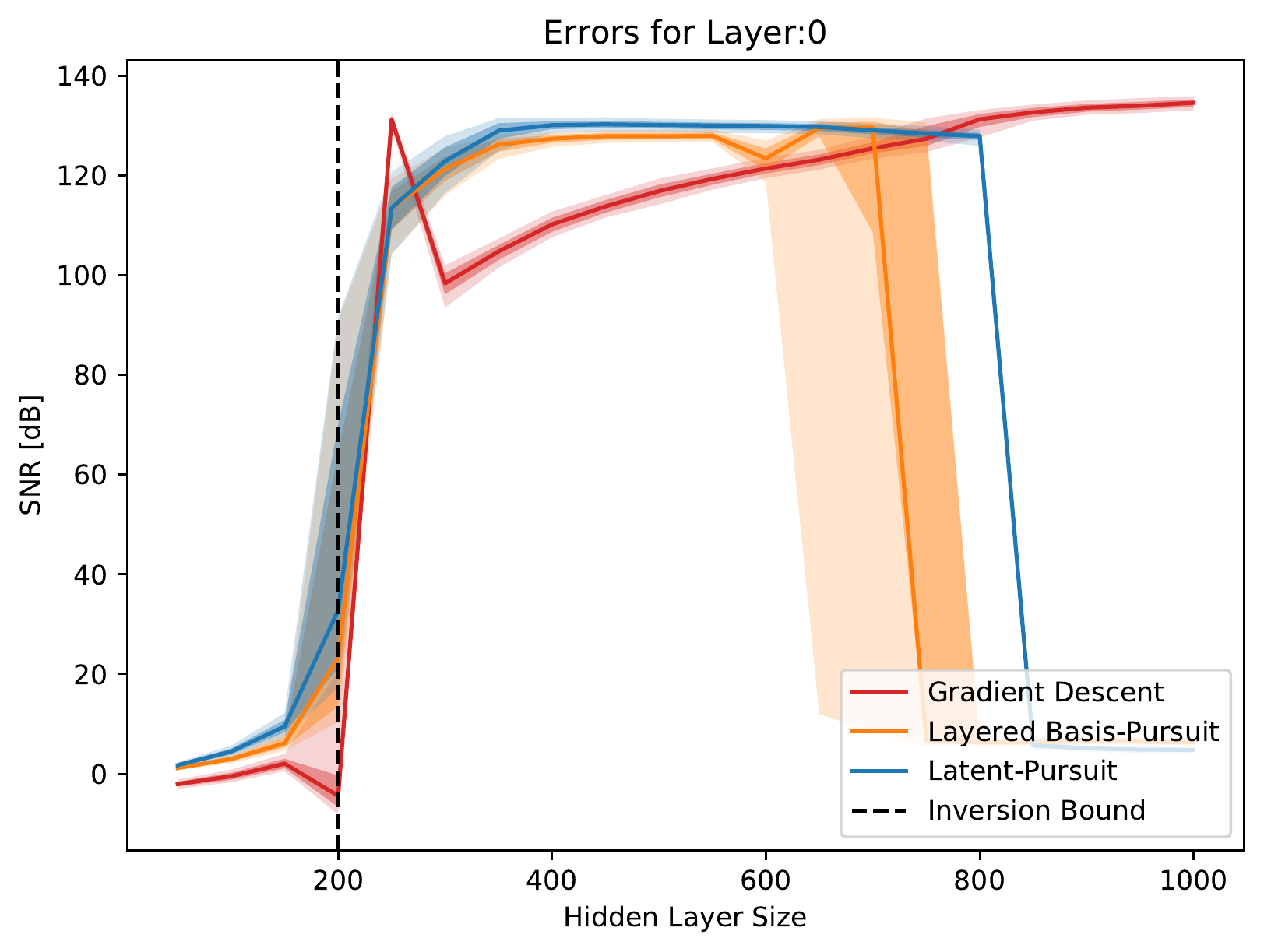}
    \caption{Latent vector $\rvz$}
\end{subfigure}
\begin{subfigure}{0.32\textwidth}
\centering
    \includegraphics[width=1\linewidth]{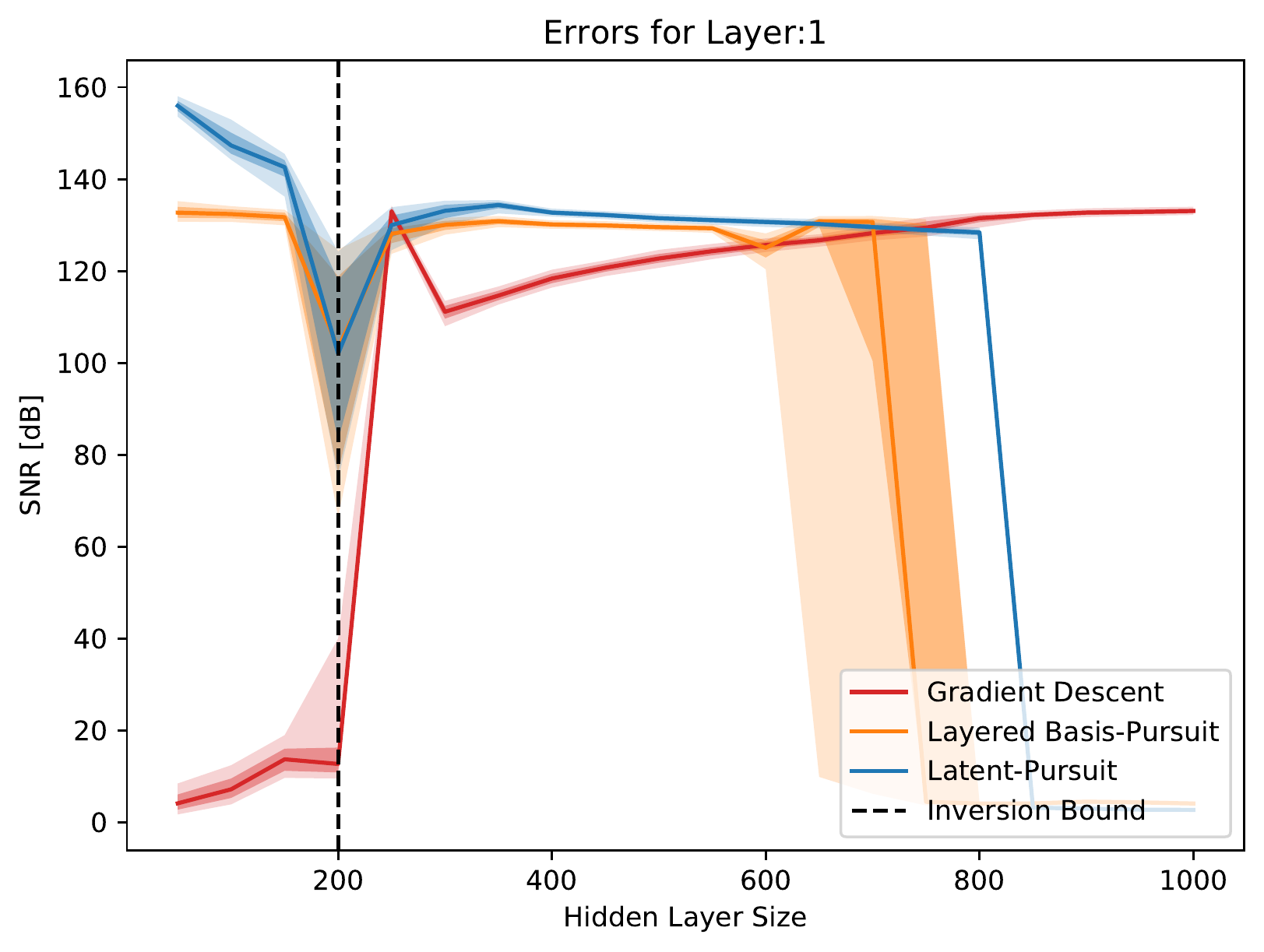}
    \caption{First hidden layer $\rvx_1$}
\end{subfigure}
\begin{subfigure}{0.32\textwidth}
\centering
    \includegraphics[width=1\linewidth]{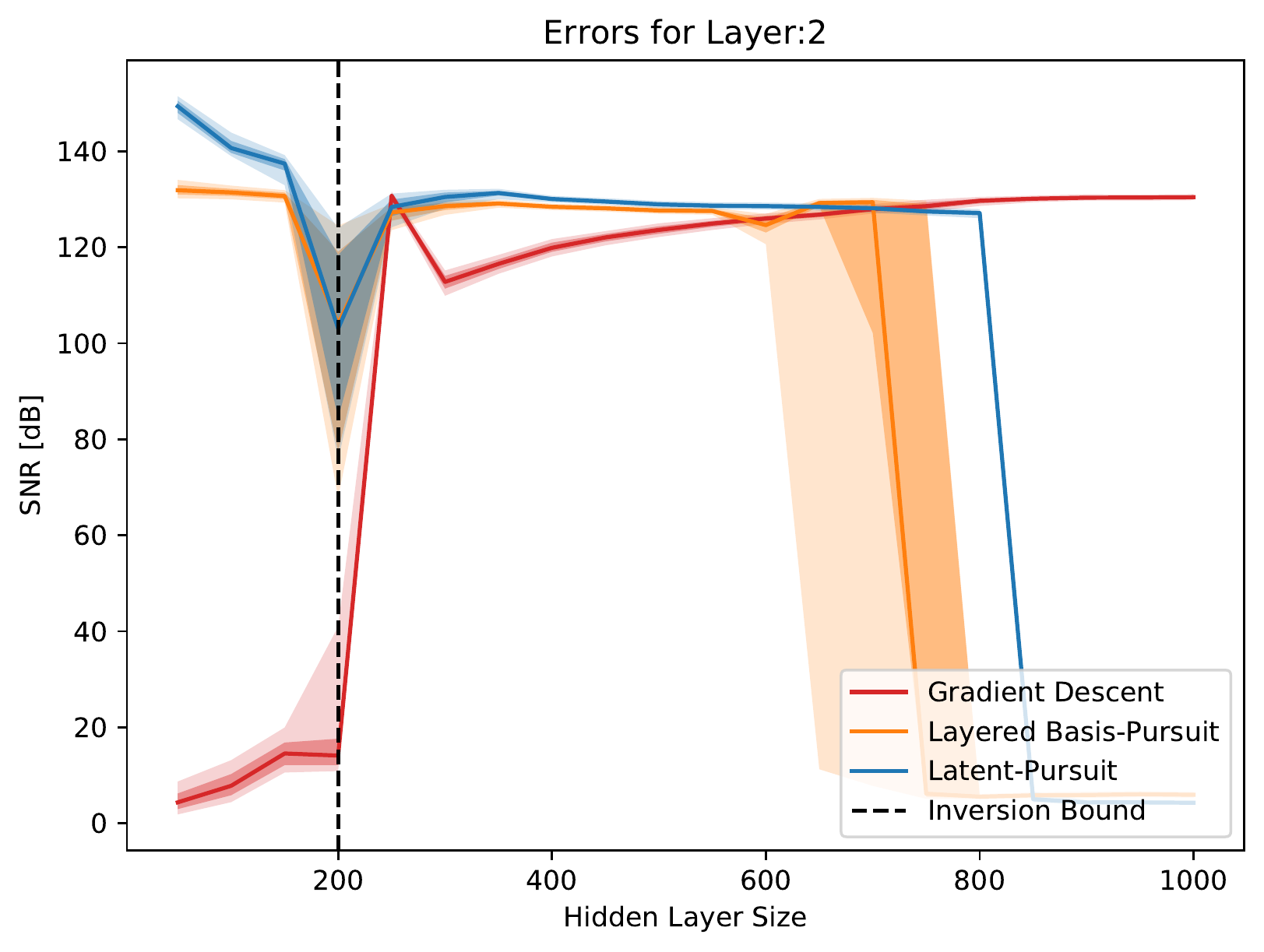}
    \caption{Image $G(\rvz)$}
\end{subfigure}
\\
\begin{subfigure}{0.32\textwidth}
\centering
    \includegraphics[width=1\linewidth]{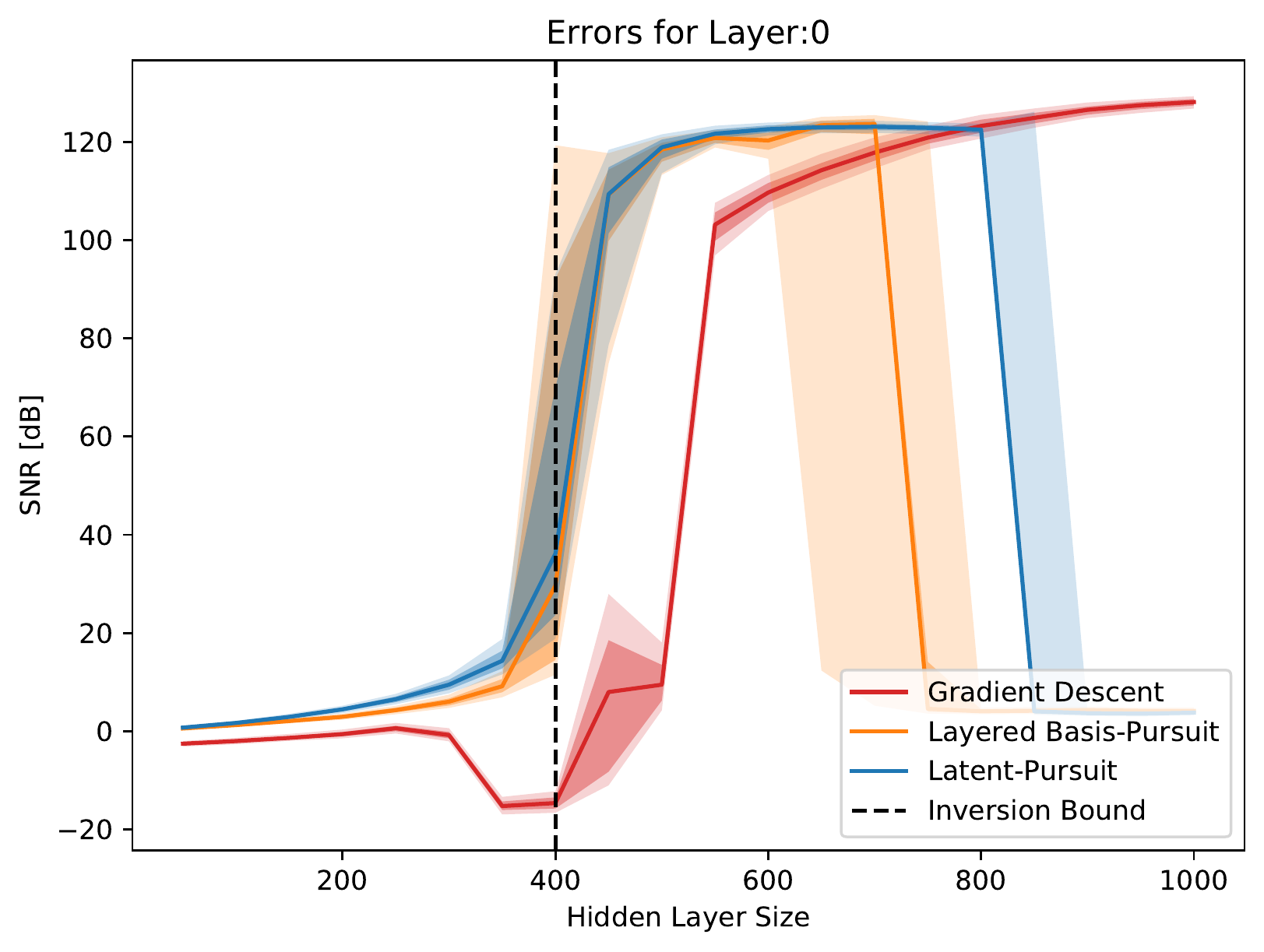}
    \caption{Latent vector $\rvz$}
\end{subfigure}
\begin{subfigure}{0.32\textwidth}
\centering
    \includegraphics[width=1\linewidth]{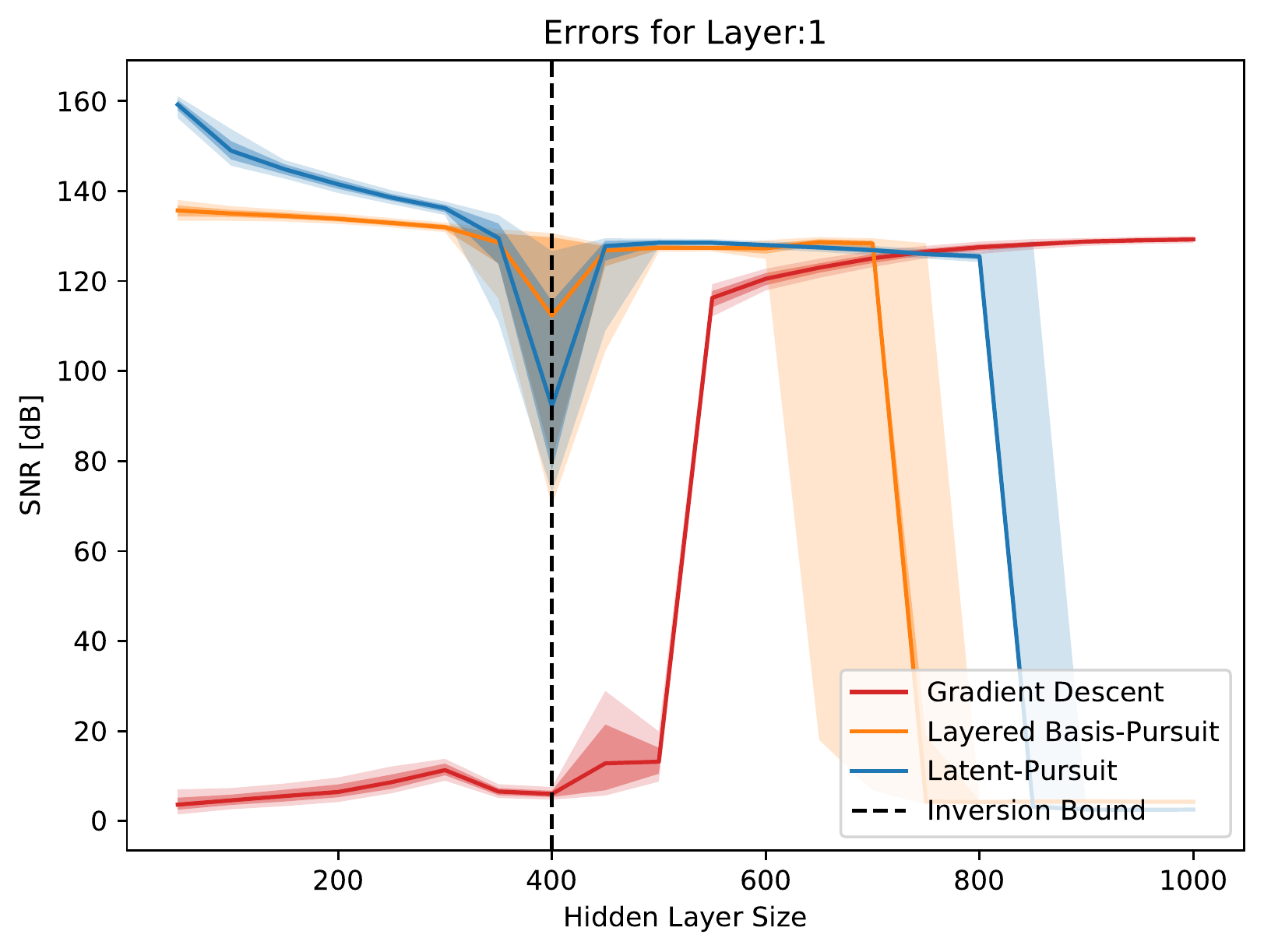}
    \caption{First hidden layer $\rvx_1$}
\end{subfigure}
\begin{subfigure}{0.32\textwidth}
\centering
    \includegraphics[width=1\linewidth]{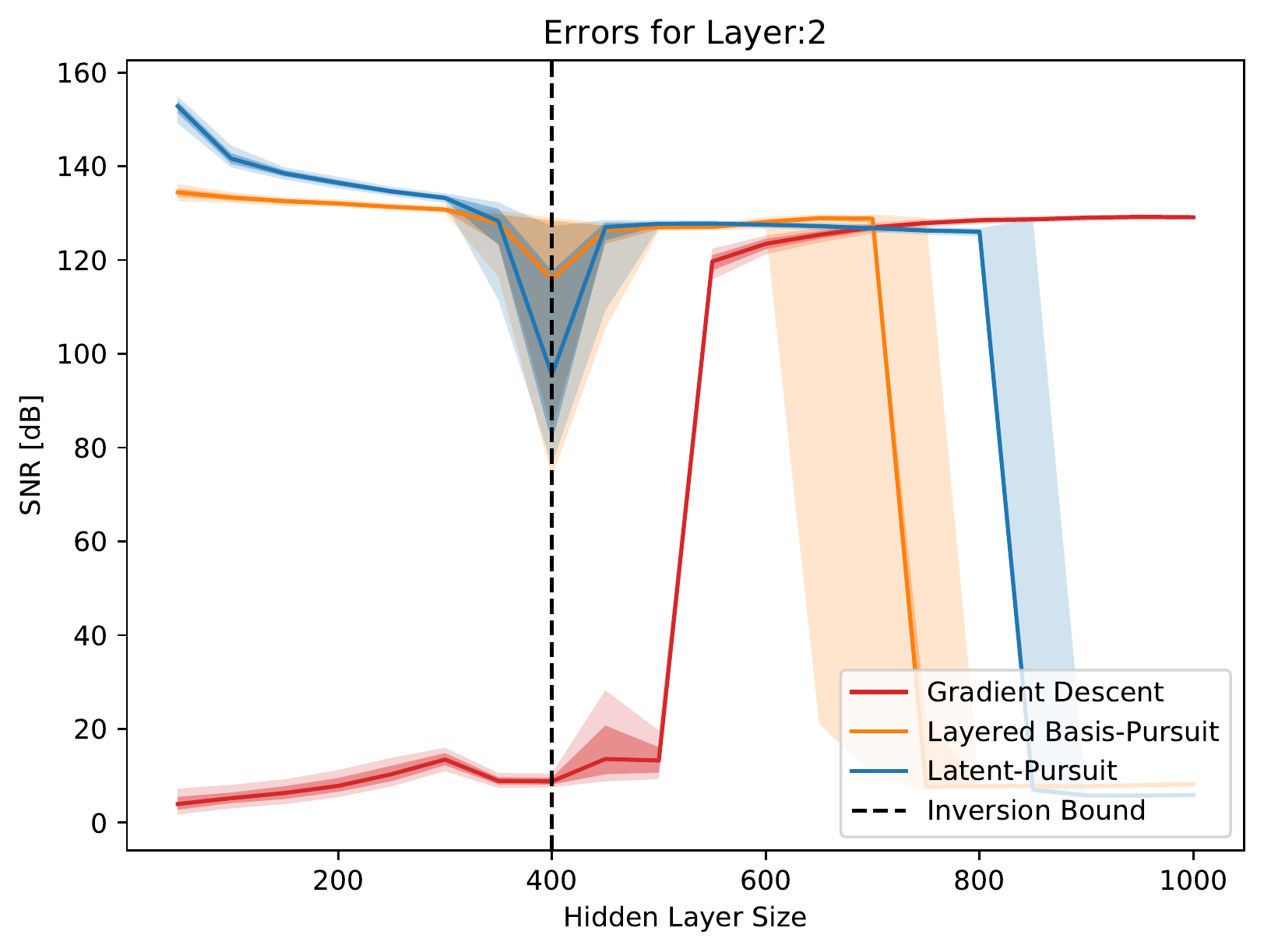}
    \caption{Image $G(\rvz)$}
\end{subfigure}
\caption{Gaussian iid weights: Recovery errors as a function of the hidden-layer size ($n_1$). The image space dimension is $625$, while the first row corresponds to $\rvz\in \R^{100}$ and the second to $\rvz\in \R^{200}$. These results support Corollary \ref{cor:random} stating that to guarantee a unique solution, the hidden layer cardinality $s_1 \approx \frac{n_1}{2}$ should be larger than the latent vector space and smaller than the cardinality of image space. Moreover, it supports Corollary \ref{cor:realizable} by showing that under the non-zero expansion condition, the Layered Basis-Pursuit (Algorithm \ref{alg:layered_bp}) and the Latent-Pursuit (Algorithm \ref{alg:latent_pursuit}) recover the original latent vector perfectly.}
\label{fig:random_matrices}
\end{figure}

In this part, we validate the above theorems on random generative models, by considering a framework similar to \cite{huang2018provably,lei2019inverting}. Here, the generator is composed of two layers:
\begin{equation}
    \rvx = G(\rvz) = \tanh (\rmW_2 \relu (\rmW_1 \rvz)),
\end{equation}
where the dimensions of the network are  $n = 625$, $n_1$ varies between $50$ to $1000$ and $n_0 \in \{ 100,200\}$. The weight matrices $\rmW_1$ and $\rmW_2$ are drawn from an iid Gaussian distribution. For each network, we test the performance of the inversion of $512$ random (realizable) signals in terms of SNR for all the layers, using gradient descent, Layered Basis-Pursuit (Algorithm \ref{alg:layered_bp}), and Latent-Pursuit (Algorithm \ref{alg:latent_pursuit}). For gradient descent, we use the smallest step-size from $\{1e-1, 1e0, 1e1, 1e2, 1e3, 1e4\}$ for $10,000$ steps that resulted with a gradient norm smaller than $1e-9$. For Layered Basis-Pursuit we use the best $\lambda_1$ from $\{1e-5, 7e-6, 3e-6, 1e-6, 0\}$, and for Latent-Pursuit, we use $\lambda_1=0$,  $\rho=1e-2$ and $\gamma=0$. In Layered Basis-Pursuit and Latent-Pursuit we preform a debiasing step in a similar manner to gradient descent. Figure \ref{fig:random_matrices} marks median results in the central line, while the ribbons show 90\%, 75\%, 25\%, and 10\% quantiles.

In these experiments the sparsity level of the hidden layer is approximately $50\%$, $s_1 = \norm{\rvx_1}_0 \approx \frac{n_1}{2}$, due to the weights being random.
In what follows, we split the analysis of the results of this experiment to three segments. Roughly, these segments are $s_1<n_0$, $n_0<s_1<n$, and $n<s_1$ as suggested by the theoretical results given in Corollary \ref{cor:random} and \ref{cor:realizable}.

In the first segment, Figure \ref{fig:random_matrices} shows that all three methods fail. Indeed, as suggested by the uniqueness conditions introduced in Corollary \ref{cor:random}, when $s_1 < n_0$, the inversion problem of the first layer does not have a unique global minimizer. The dashed vertical line in Figure \ref{fig:random_matrices} marks the spot where $\frac{n_1}{2} = n_0$. Interestingly, we note that the conclusions in \cite{huang2018provably,lei2019inverting}, suggesting that large latent spaces cause gradient descent to fail, are imprecise and valid only for fixed hidden layer size. This can be seen by comparing $n_0=100$ to $n_0=200$. As a direct outcome of our uniqueness study and as demonstrated in Figure \ref{fig:random_matrices}, gradient descent (and any other algorithm) fails when the ratio between the cardinalities of the layers is smaller than $2$. Nevertheless, Figure \ref{fig:random_matrices} exposes an advantage for using our approach over gradient descent. Note that our methods successfully invert the model for all the layers that follow the layer for which the sparsity assumptions do not hold, and fail only past that layer, since only then uniqueness is no longer guaranteed. However, since gradient descent starts at a random location, all the layers are poorly reconstructed.

For the second segment, we recall Theorem \ref{thm:layered_bp} and in particular Corollary \ref{cor:realizable}. There we have shown that Layered Basis-Pursuit and Latent-Pursuit are guaranteed to perfectly recover the latent vector as long as the cardinality of the midlayer $s_1=\norm{\rvx_1}_0$ satisfies $n_0 \leq s_1 \leq 1/ 3\mu(\rmW_1)$. Indeed, Figure \ref{fig:random_matrices} demonstrates the success of these two methods even when $s_1 \approx \frac{n_1}{2}$ is greater than the worst-case bound $1/ 3\mu(\rmW_1)$. Moreover, this figure demonstrates that Latent-Pursuit, which leverages additional properties of the signal, outperforms Layered Basis-Pursuit, especially when $s_1$ is large. 
Importantly, while the analysis in \cite{lei2019inverting} suggests that $n$ has to be larger than $n_1$, in practice, all three methods succeed to invert the signal even when $n_1 > n$.
This result highlights the strength of the proposed analysis that leans on the cardinality of the layers rather than their size. 

We move on to the third and final segment, where the size of hidden layer is significantly larger than the dimension of the image. Unfortunately, in this scenario the layer-wise methods fail, while gradient descent succeeds. Note that, in this setting, inverting the last layer solely is an ambitious (actually, impossible) task; however, since gradient descent solves an optimization problem of a much lower dimension, it succeeds in this case as well.
This experiment and the accompanied analysis suggest that a hybrid approach, utilizing both gradient descent and the layered approach, might be of interest. We defer a study of such an approach for future work.

\subsection{Trained Network}
To demonstrate the practical contribution of our work, we experiment with a generative network trained on the MNIST dataset. Our architecture is composed of fully connected layers of sizes 20, 128, 392, and finally an image of size $28\times 28=784$. The first two layers include batch-normalization\footnote{Note that after training, batch-normalization is a simple linear operation.} and a $\relu$ activation function, whereas the last one includes a piecewise linear unit \cite{nicolae2018plu}. We train this network in an adversarial fashion using a fully connected discriminator and spectral normalization \cite{miyato2018spectral}. 

We should note that images produced by fully connected models are typically not as visually appealing as ones generated by convolutional architectures. However, since the theory provided here focuses on fully connected models, this setting was chosen for the experimental section, similar to other previous work \cite{huang2018provably,lei2019inverting} that study the inversion process. 


\paragraph{Network inversion: } We start with the noiseless setting and test our inversion algorithm and compare it to the oracle (which knows the exact support of each layer) and to gradient descent. To invert a signal and compute its reconstruction quality, we first invert the entire model and estimate the latent vector, and then, we feed this vector back to the model to estimate the hidden representations. 
For our algorithm we use $\rho=1e-2$ for all layers and $10,000$ iterations of debiasing. For gradient-descent run, we use $10,000$ iterations, momentum of $0.9$ and a step size of $1e-1$ that gradually decays to assure convergence. Overall, we repeat this experiment $512$ times.

Figure \ref{fig:trained_realizable_errors} demonstrates the reconstruction error for all the layers. First, we observe that the performance of our inversion algorithm is on par with those of the oracle. Moreover, not only does our approach performs much better than gradient descent, but in many experiments the latter fails utterly when trying to reconstruct the image itself. In Figures \ref{fig:trained_realizable_images_fail} and \ref{fig:trained_realizable_images_succ} we demonstrate successful and failure cases of the gradient-descent algorithm compared to our approach.

A remark regarding the run-time of these algorithms is in place. Using an Nvidia 1080Ti GPU, the proposed Latent-Pursuit algorithm took approximately $15$ seconds per layer to converge for a total of about $75$ seconds to complete, including the debiasing step for all $512$ experiments. On the other hand, gradient-descent took approximately $30$ seconds to conclude.

\begin{figure}[H]
\centering
\begin{subfigure}{0.49\textwidth}
\centering
    \includegraphics[width=1\linewidth]{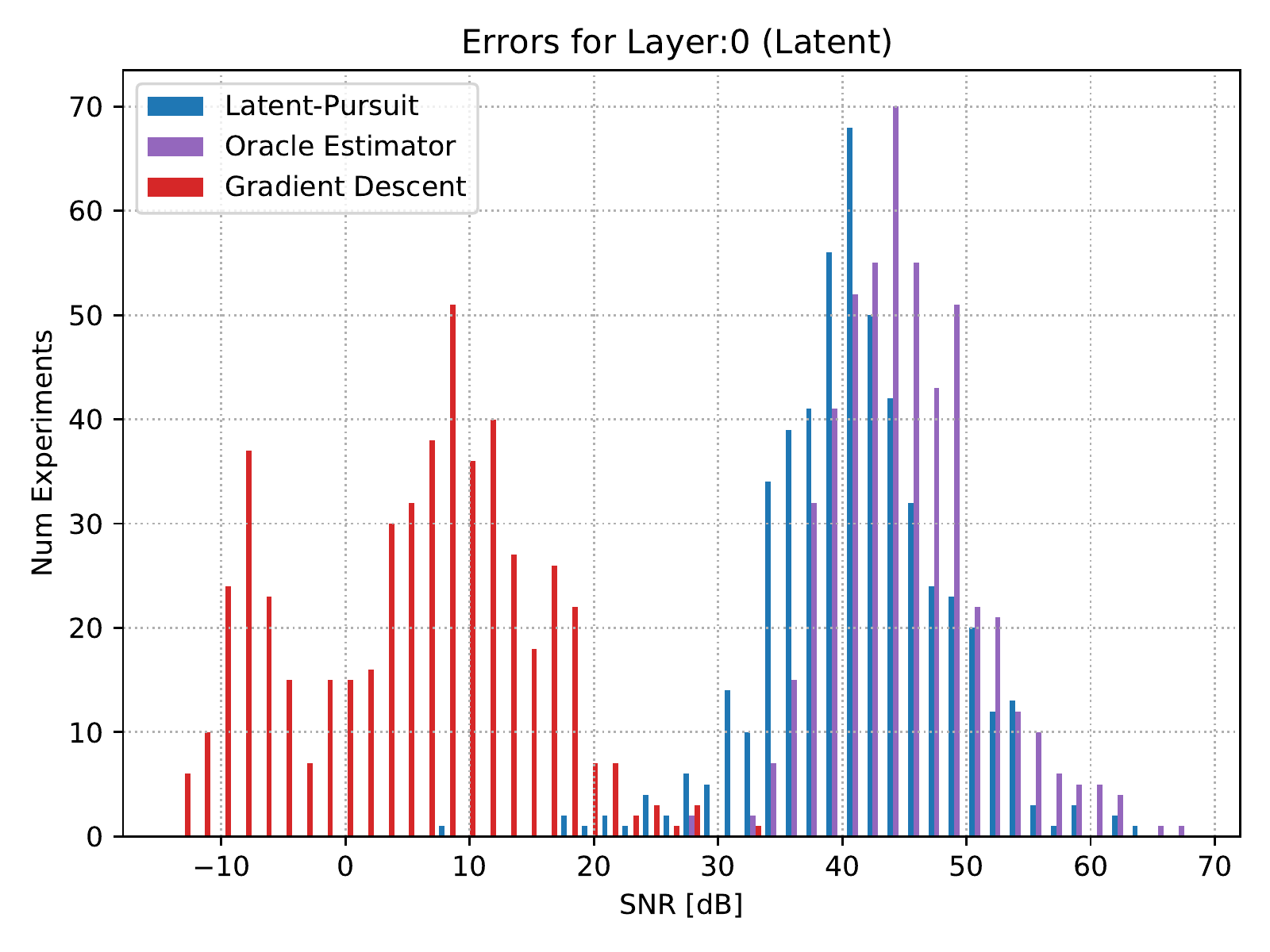}
    \caption{Latent vector $\rvz$}
\end{subfigure}
\begin{subfigure}{0.49\textwidth}
\centering
    \includegraphics[width=1\linewidth]{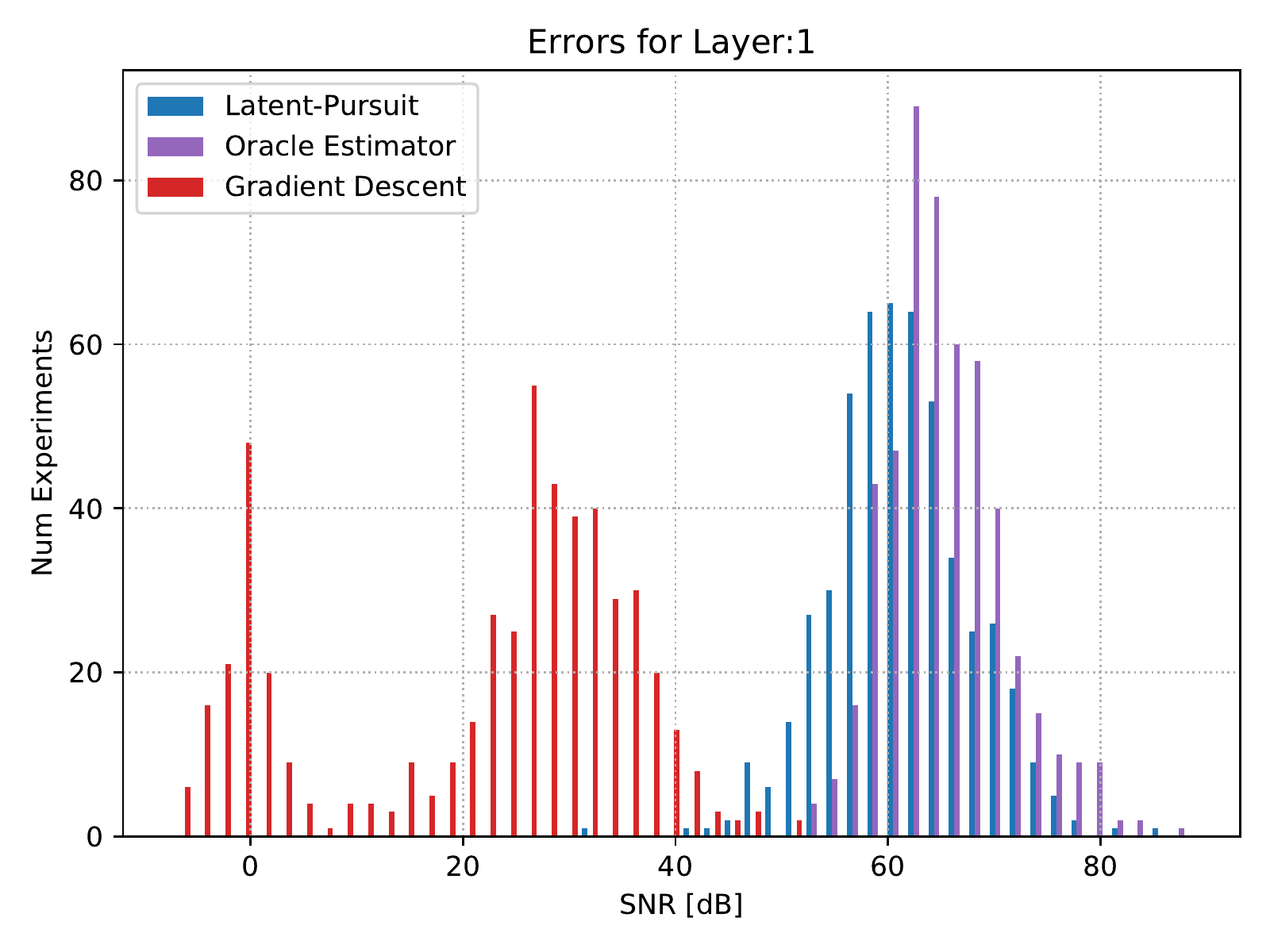}
    \caption{First hidden layer $\rvx_1$}
\end{subfigure}
\\
\begin{subfigure}{0.49\textwidth}
\centering
    \includegraphics[width=1\linewidth]{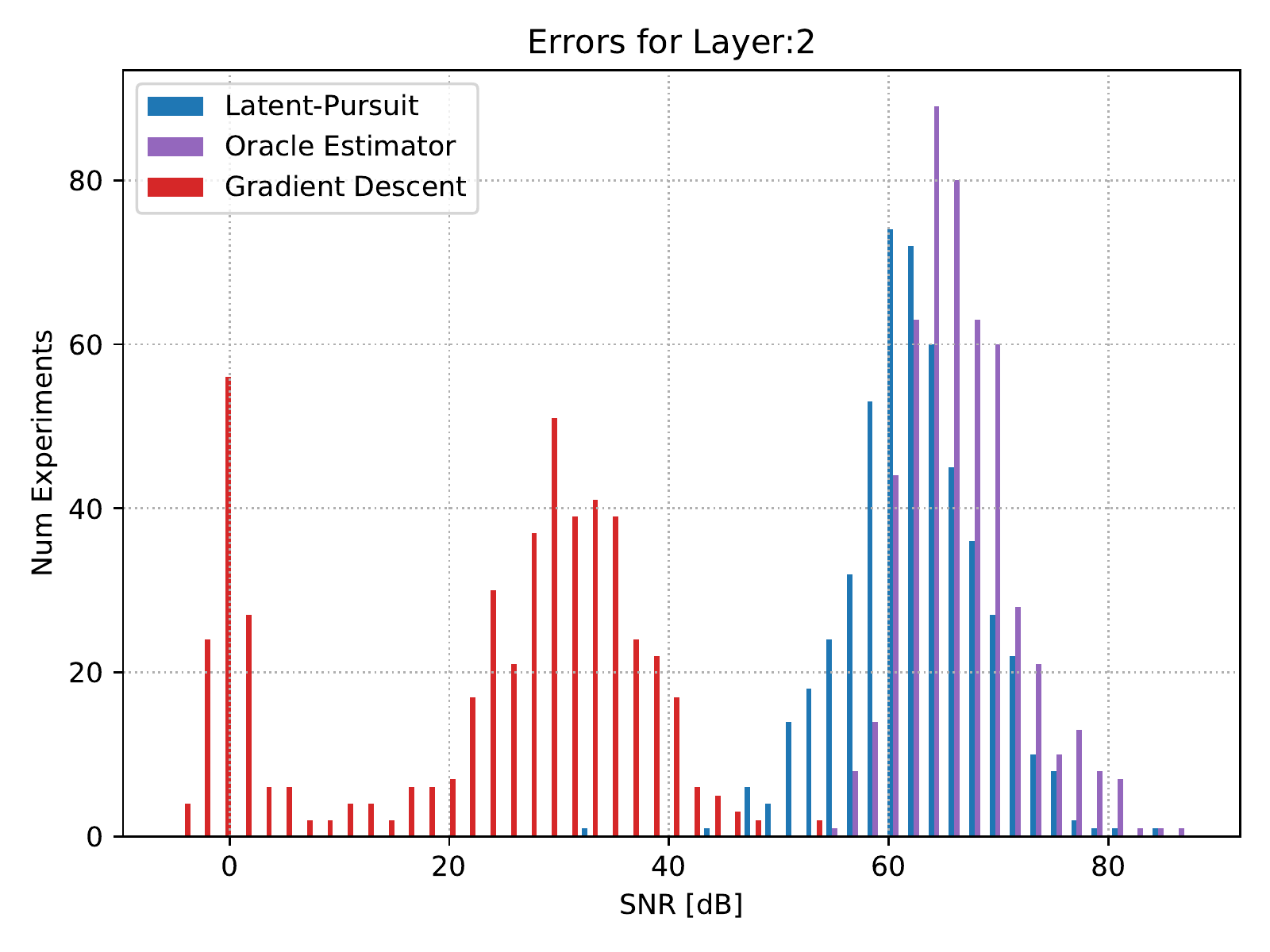}
    \caption{Second hidden layer  $\rvx_2$}
\end{subfigure}
\begin{subfigure}{0.49\textwidth}
\centering
    \includegraphics[width=1\linewidth]{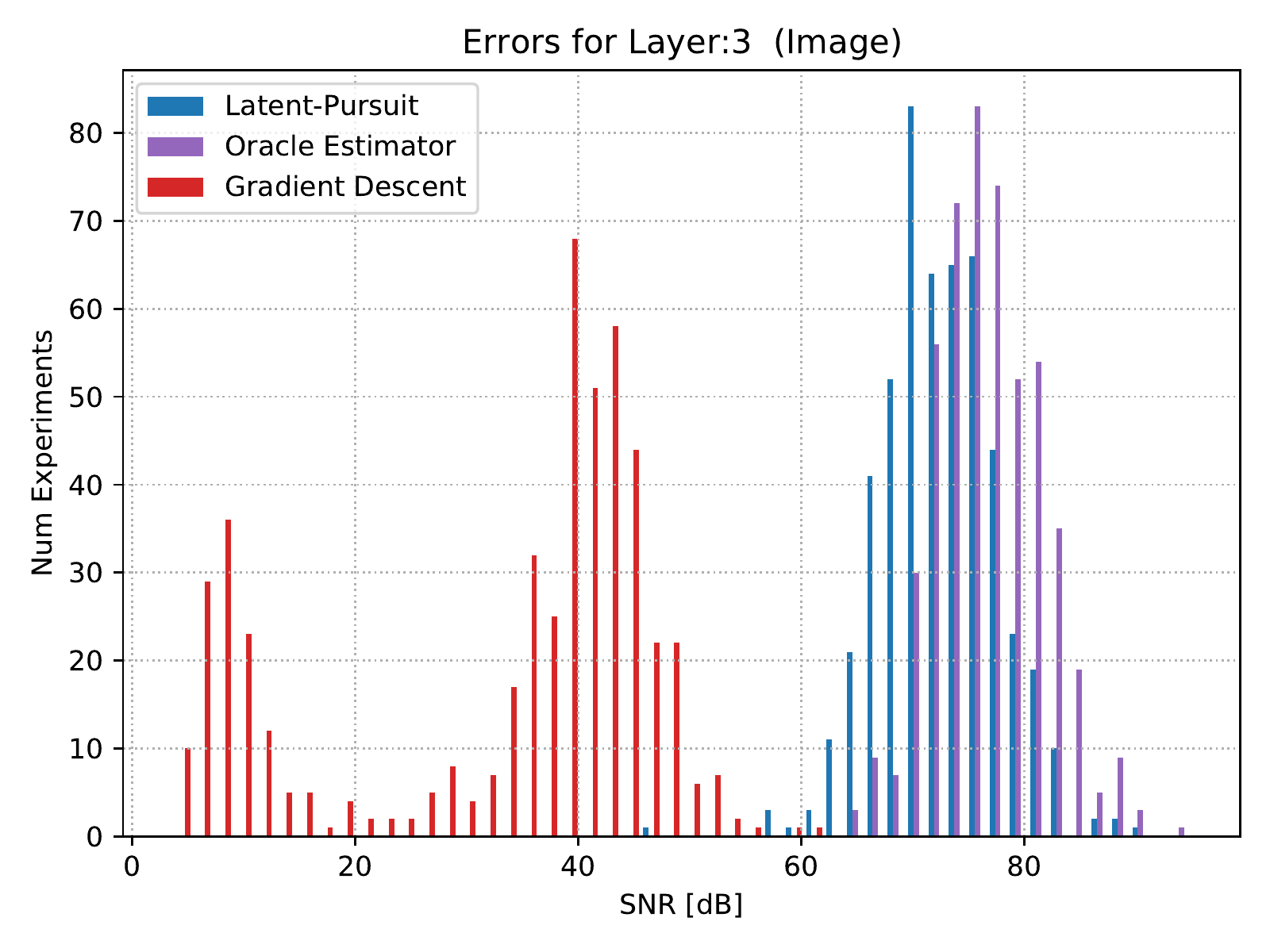}
    \caption{Image  $G(\rvz)$}
\end{subfigure}
\caption{Trained 3-Layers Model: Reconstruction error for all the layers for $512$ clean images. As can be observed, the Latent-Pursuit almost mimic the oracle (which knows all the supports), and outperforms gradient descent.}
\label{fig:trained_realizable_errors}
\end{figure}

\begin{figure}[H]
    \centering
    \begin{subfigure}{0.25\textwidth}
        Ground truth \vspace*{10pt} \\ \vspace*{10pt}Gradient descent \\ Our approach 
    \end{subfigure}
    \begin{subfigure}{0.7\textwidth}
        \includegraphics[trim={10 100 10 100},clip,width=1\linewidth]{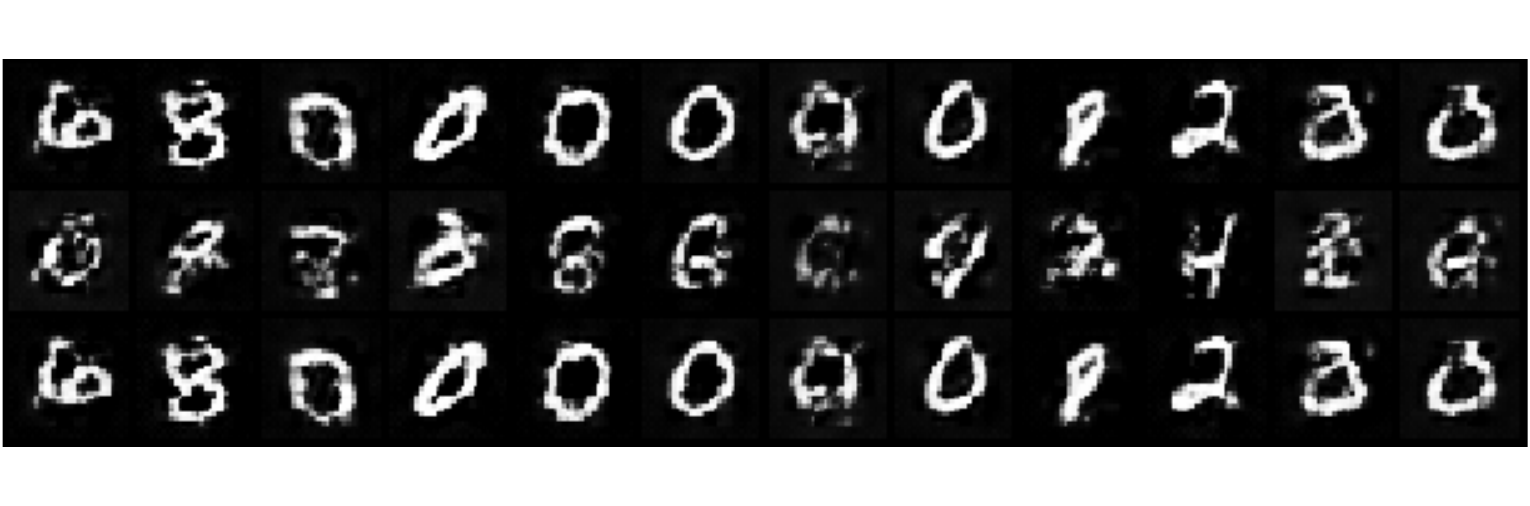}
    \end{subfigure}
    \caption{Reconstruction failures of gradient descent on clean images.}
    \label{fig:trained_realizable_images_fail}
\end{figure}

\begin{figure}[H]
    \centering
    \begin{subfigure}{0.25\textwidth}
        Ground truth \vspace*{10pt} \\ \vspace*{10pt}Gradient descent \\ Our approach 
    \end{subfigure}
    \begin{subfigure}{0.7\textwidth}
        \includegraphics[trim={10 100 10 100},clip,width=1\linewidth]{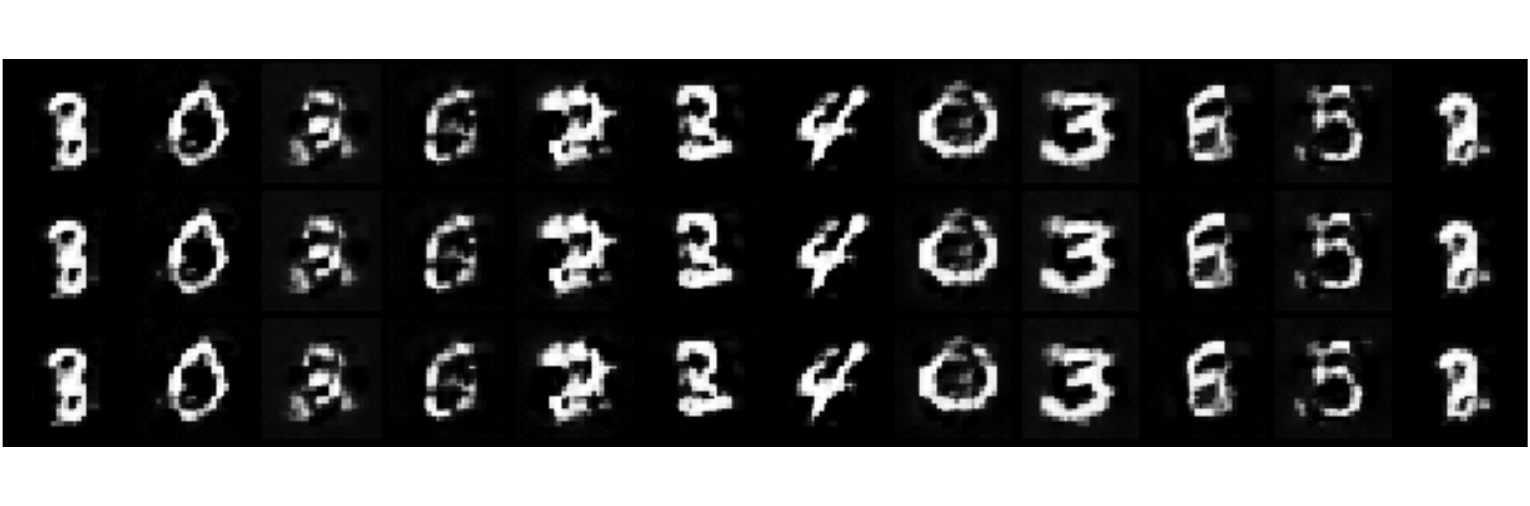}
    \end{subfigure}
    \caption{Successful reconstructions of gradient descent on clean images.}
    \label{fig:trained_realizable_images_succ}
\end{figure}

\paragraph{Image inpainting:} We continue our experiments with image inpainting, i.e. inverting the network and reconstructing a clean signal when only some of its pixels are known. First, we apply a random mask in which $45\%$ of the pixels are randomly concealed. Since the number of known pixels is still larger than the number of non-zero elements in the layer preceding it, our inversion algorithm usually reconstructs the image successfully as well as all the other hidden representations in the network. In this experiment, we perform slightly worse than the Oracle, which is not surprising considering the information disparity between the two. As for gradient descent, we see similar results to the ones in the previous experiment where no mask was applied.
Figures \ref{fig:inpainting_random_mse}-\ref{fig:inpainting_random_images_succ} demonstrate the performance of our approach compared to gradient descent in terms of SNR and image quality respectively.

\begin{figure}[H]
\centering
\begin{subfigure}{0.49\textwidth}
\centering
    \includegraphics[width=1\linewidth]{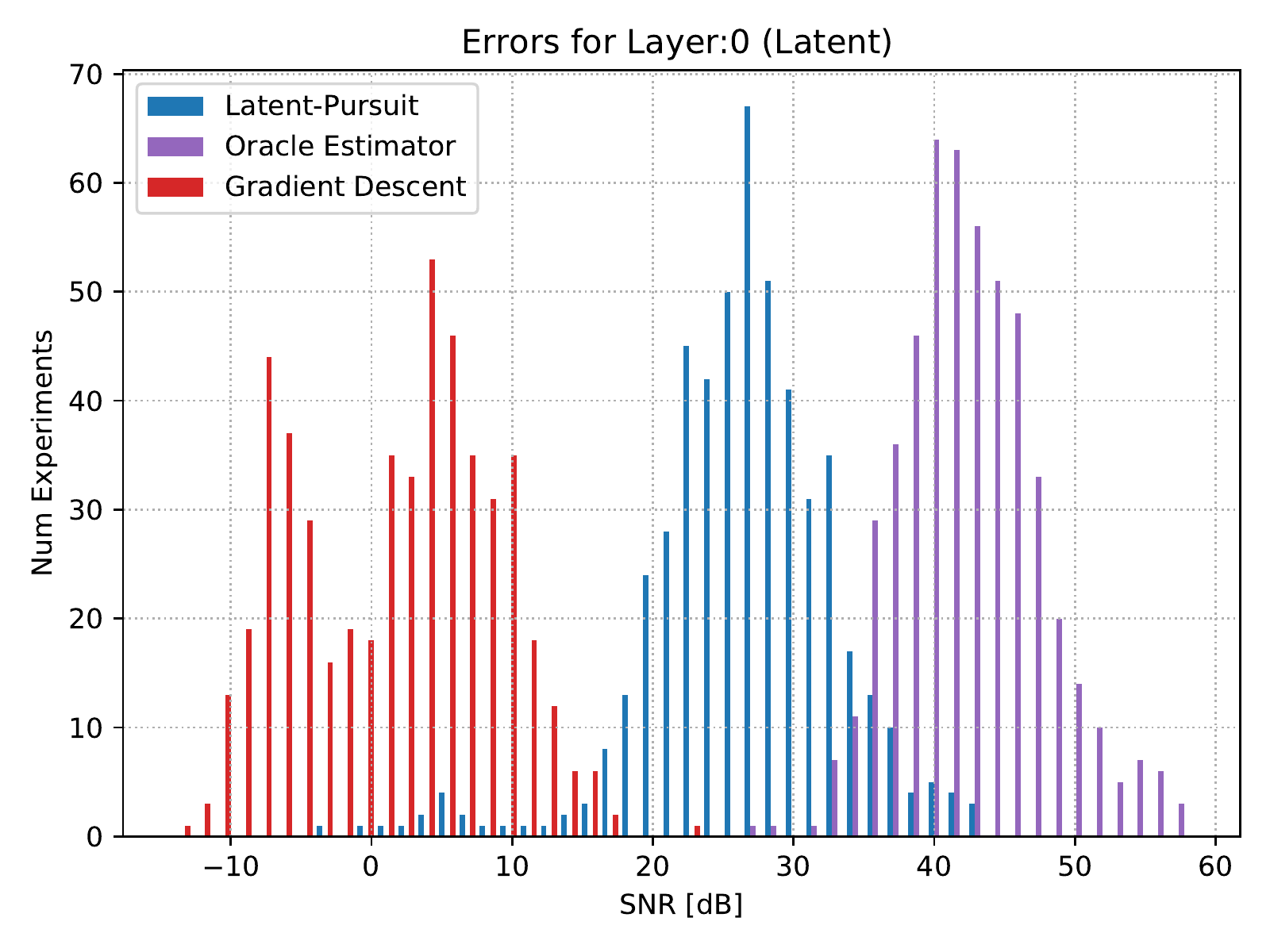}
    \caption{Latent vector $\rvz$}
\end{subfigure}
\begin{subfigure}{0.49\textwidth}
\centering
    \includegraphics[width=1\linewidth]{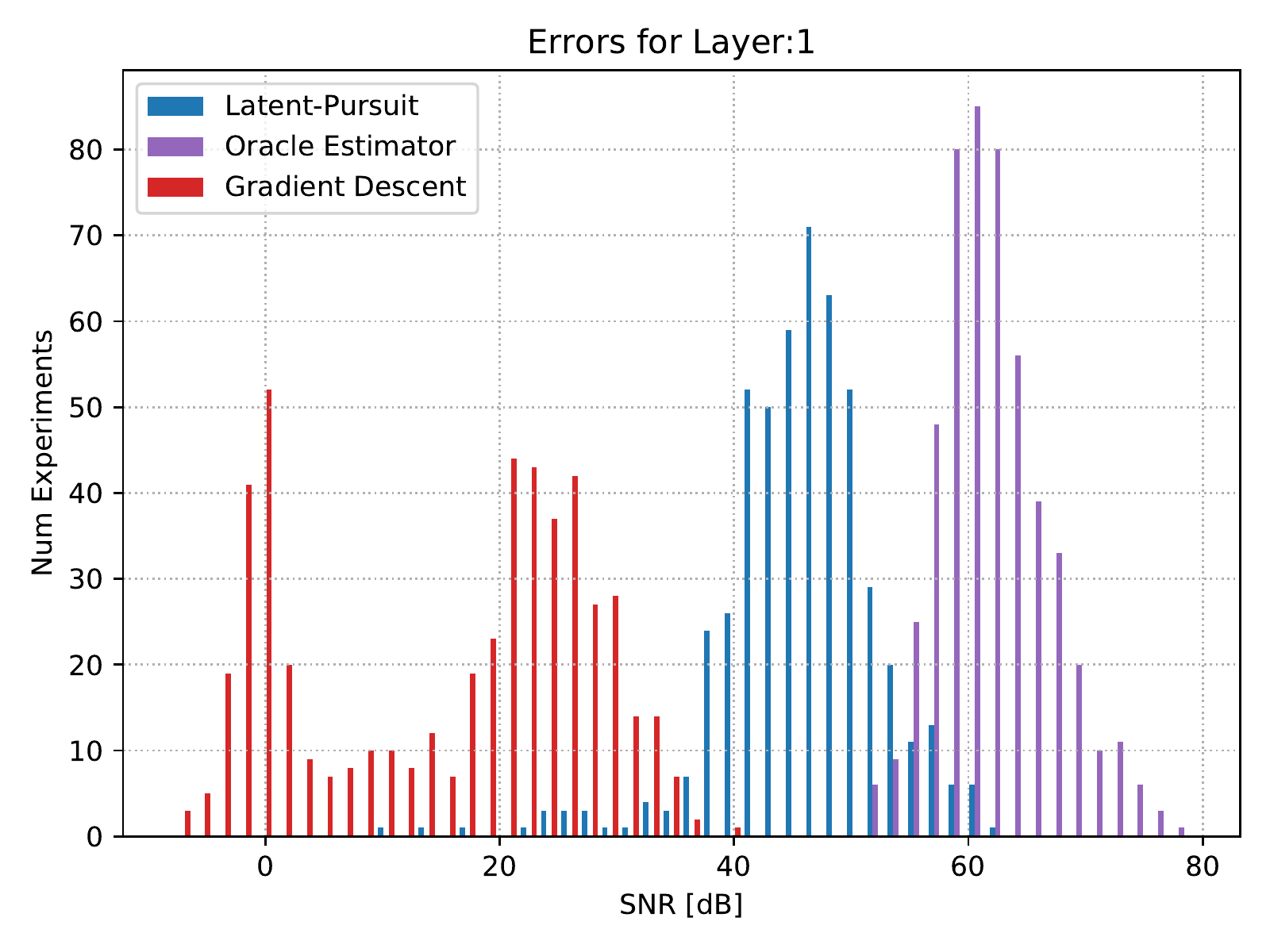}
    \caption{First hidden layer $\rvx_1$}
\end{subfigure}
\\
\begin{subfigure}{0.49\textwidth}
\centering
    \includegraphics[width=1\linewidth]{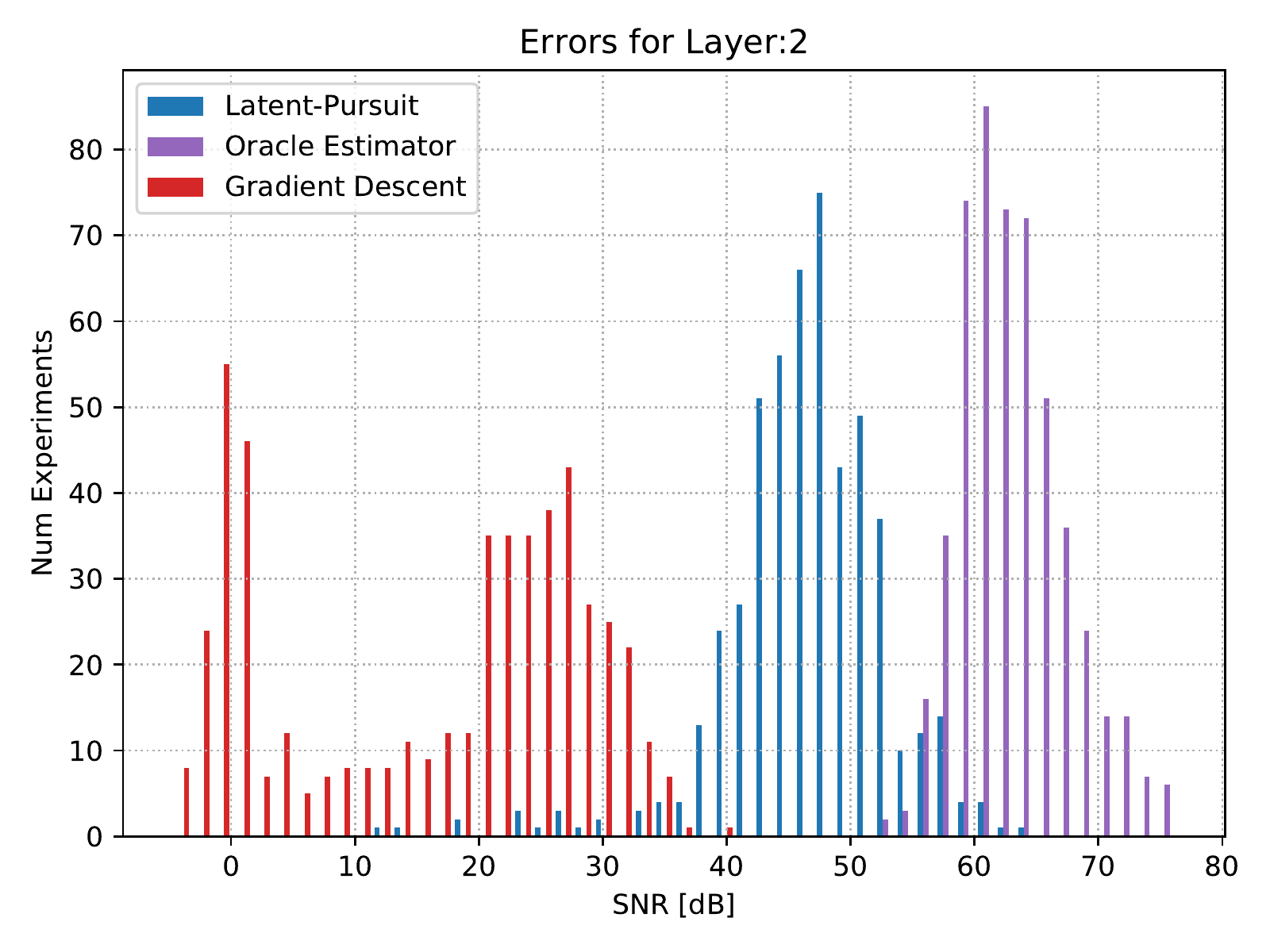}
    \caption{Second hidden layer  $\rvx_2$}
\end{subfigure}
\begin{subfigure}{0.49\textwidth}
\centering
    \includegraphics[width=1\linewidth]{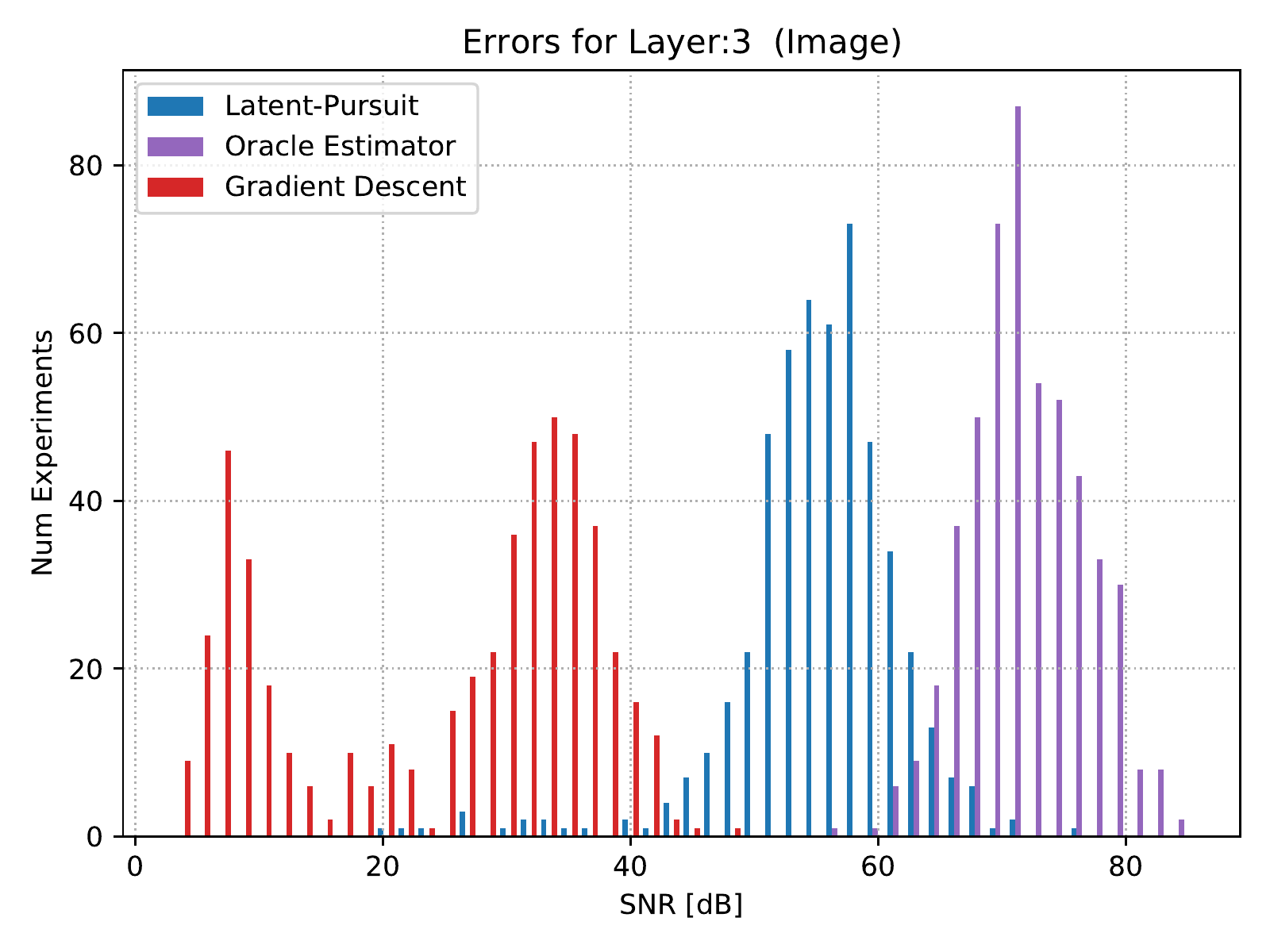}
    \caption{Image  $G(\rvz)$}
\end{subfigure}
\caption{Random mask inpainting reconstruction error for all the layers on a trained generator.}
\label{fig:inpainting_random_mse}
\end{figure}

\begin{figure}[H]
    \centering
    \begin{subfigure}{0.25\textwidth}
        Ground truth \vspace*{10pt} \\ Masked input \vspace*{10pt} \\ \vspace*{10pt}Gradient descent \\ Our approach 
    \end{subfigure}
    \begin{subfigure}{0.7\textwidth}
        \includegraphics[trim={10 80 10 80},clip,width=1\linewidth]{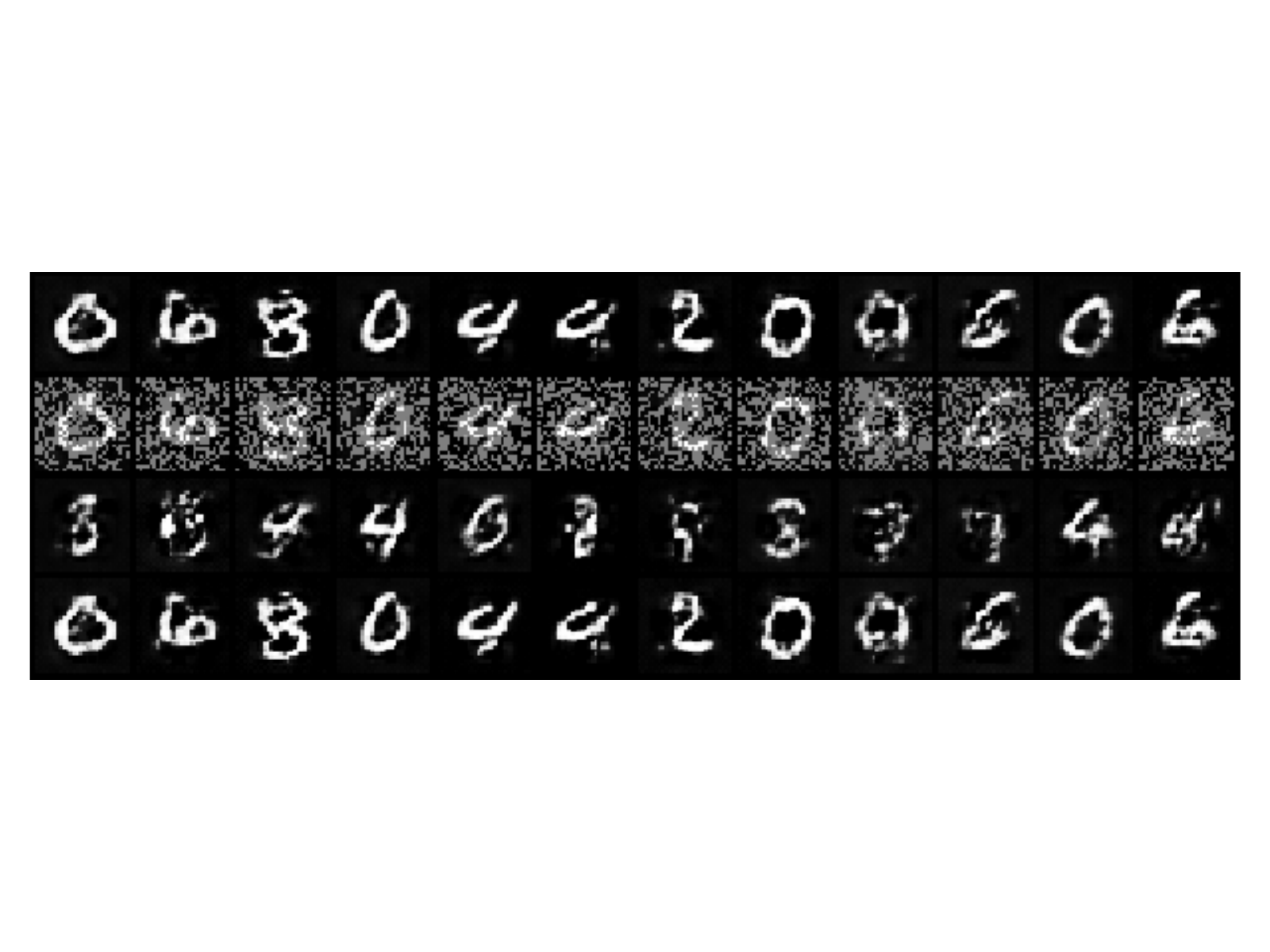}
    \end{subfigure}
    \caption{Random mask inpainting gradient descent failed reconstructions.}
    \label{fig:inpainting_random_images_fail}
\end{figure}

\begin{figure}[H]
    \centering
    \begin{subfigure}{0.25\textwidth}
        Ground truth \vspace*{10pt} \\ Masked input \vspace*{10pt} \\ \vspace*{10pt}Gradient descent \\ Our approach 
    \end{subfigure}
    \begin{subfigure}{0.7\textwidth}
        \includegraphics[trim={10 80 10 80},clip,width=1\linewidth]{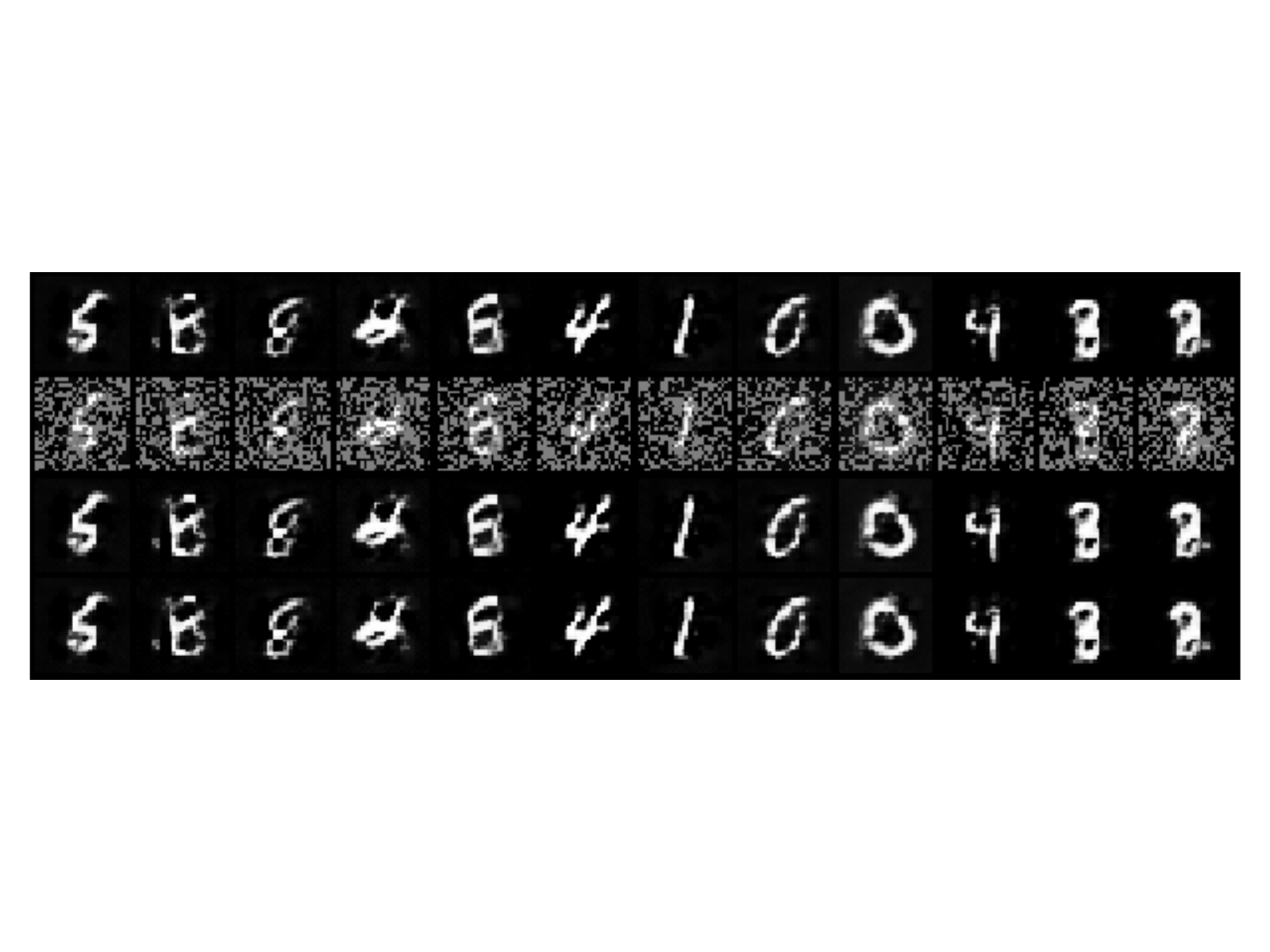}
    \end{subfigure}
    \caption{Random mask inpainting gradient descent successful reconstructions.}
    \label{fig:inpainting_random_images_succ}
\end{figure}

We repeat the above experiment with a similar setting, only this time the mask conceals the upper $\sim 45\%$ of each image instead of acting randomly (13 out of 28 rows). The results of this experiment, which are provided in Figures \ref{fig:inpainting_top_mse}-\ref{fig:inpainting_top_images_succ}, lead to similar conclusions as in the previous experiment. Note that since the model contains fully connected layers, we expect the two experiments to show similar results (as opposed to a convolutional model).

\begin{figure}[H]
\centering
\begin{subfigure}{0.49\textwidth}
\centering
    \includegraphics[width=1\linewidth]{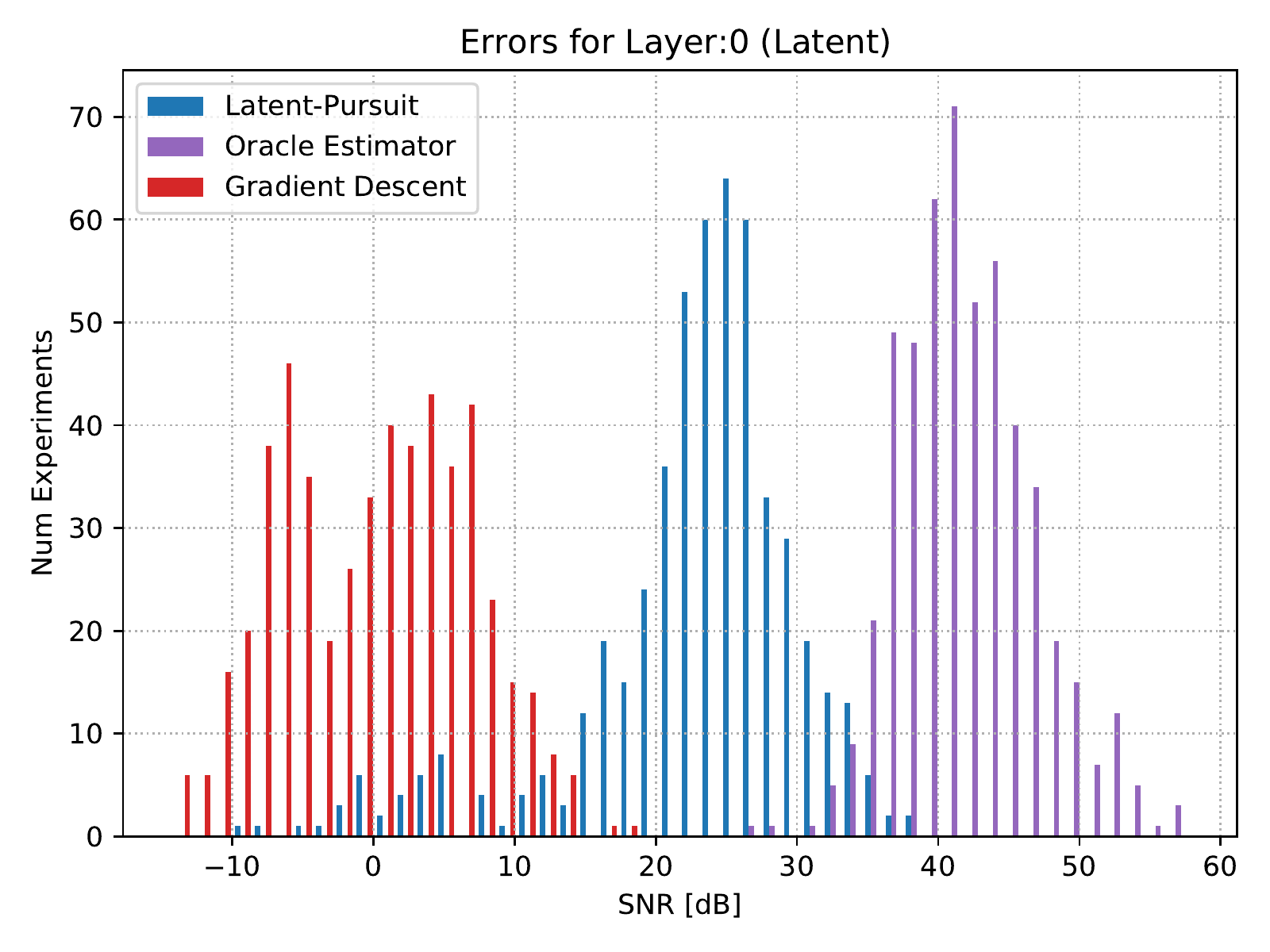}
    \caption{Latent vector $\rvz$}
\end{subfigure}
\begin{subfigure}{0.49\textwidth}
\centering
    \includegraphics[width=1\linewidth]{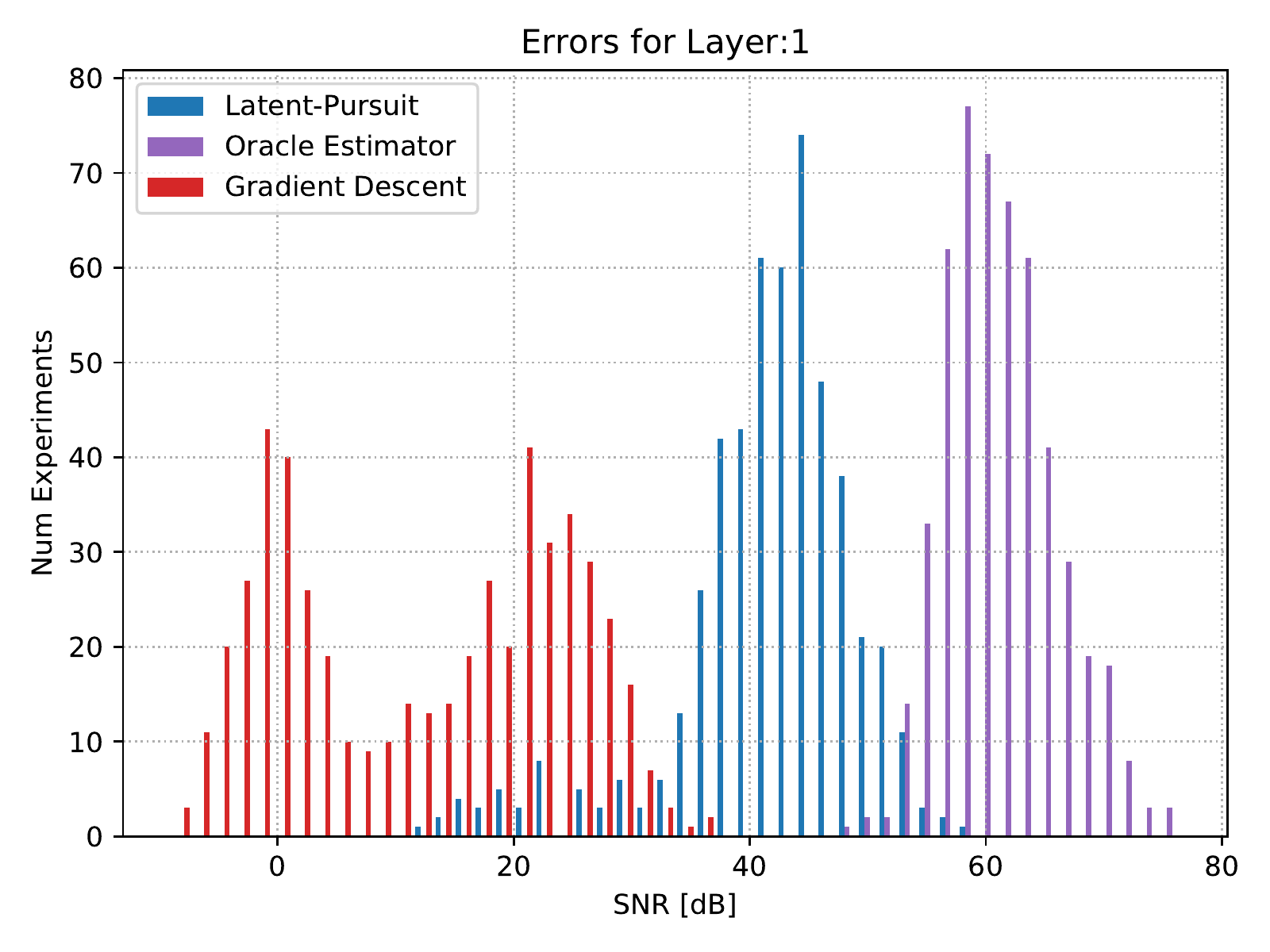}
    \caption{First hidden layer $\rvx_1$}
\end{subfigure}
\\
\begin{subfigure}{0.49\textwidth}
\centering
    \includegraphics[width=1\linewidth]{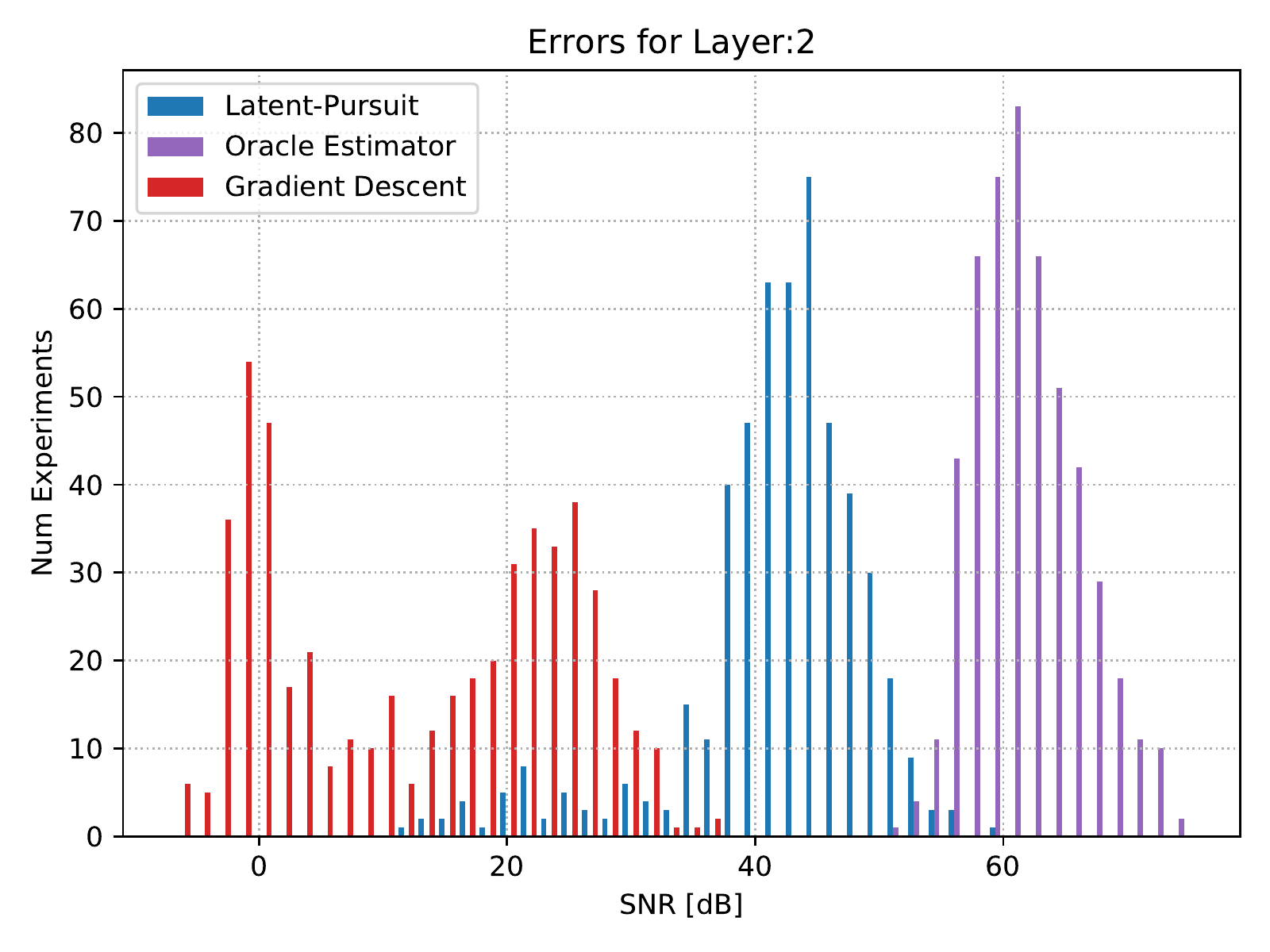}
    \caption{Second hidden layer  $\rvx_2$}
\end{subfigure}
\begin{subfigure}{0.49\textwidth}
\centering
    \includegraphics[width=1\linewidth]{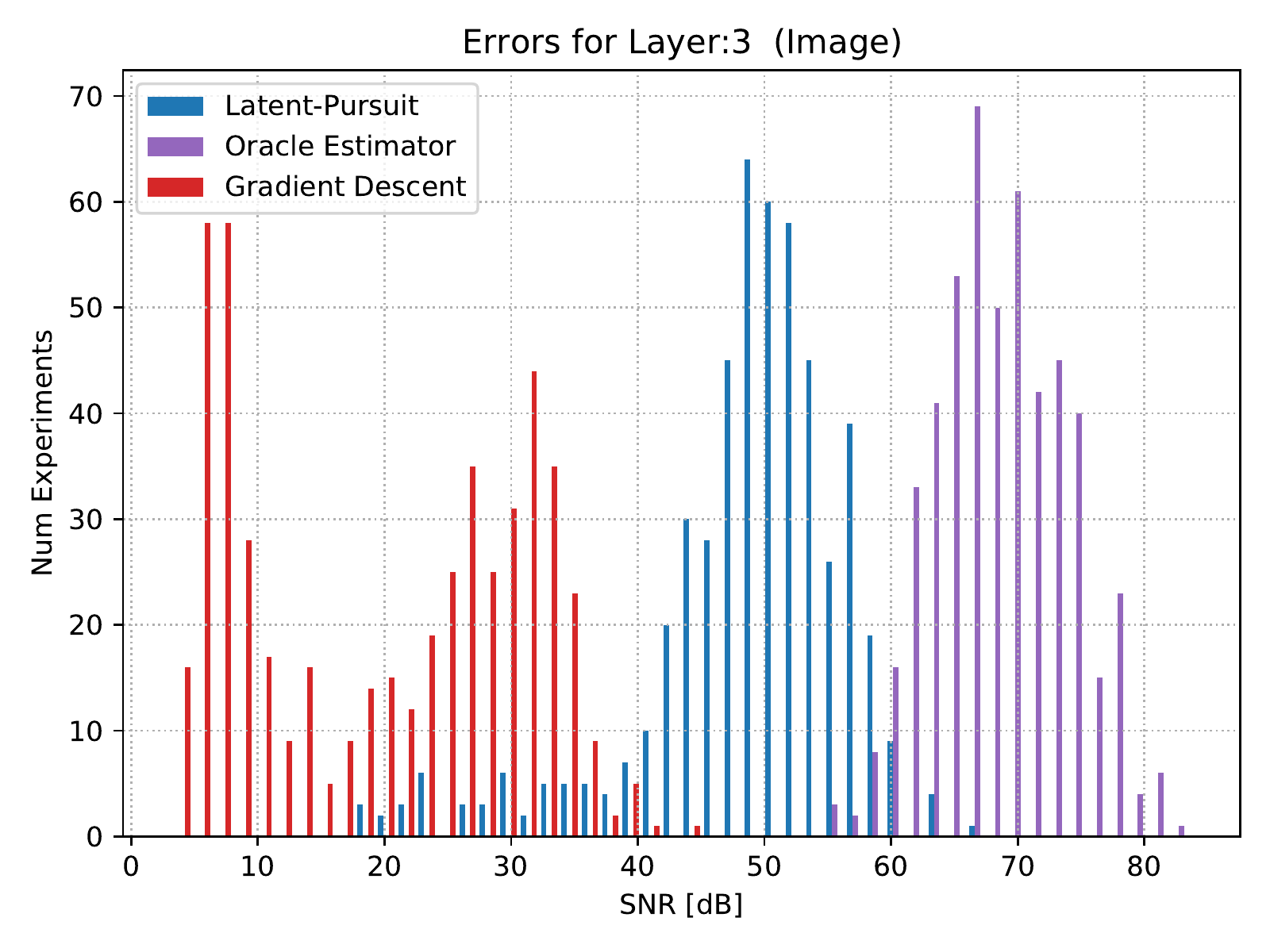}
    \caption{Image  $G(\rvz)$}
\end{subfigure}
\caption{Half image mask inpainting reconstruction error for all the layers.}
\label{fig:inpainting_top_mse}
\end{figure}

\begin{figure}[H]
    \centering
    \begin{subfigure}{0.25\textwidth}
        Ground truth \vspace*{10pt} \\ Masked input \vspace*{10pt} \\ \vspace*{10pt}Gradient descent \\ Our approach 
    \end{subfigure}
    \begin{subfigure}{0.7\textwidth}
        \includegraphics[trim={10 80 10 80},clip,width=1\linewidth]{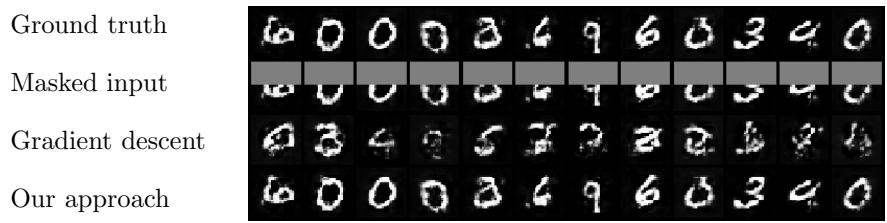}
    \end{subfigure}
    \caption{Half image mask inpainting gradient descent failed reconstructions.}
    \label{fig:inpainting_top_images_fail}
\end{figure}

\begin{figure}[H]
    \centering
    \begin{subfigure}{0.25\textwidth}
        Ground truth \vspace*{10pt} \\ Masked input \vspace*{10pt} \\ \vspace*{10pt}Gradient descent \\ Our approach 
    \end{subfigure}
    \begin{subfigure}{0.7\textwidth}
        \includegraphics[trim={10 80 10 80},clip,width=1\linewidth]{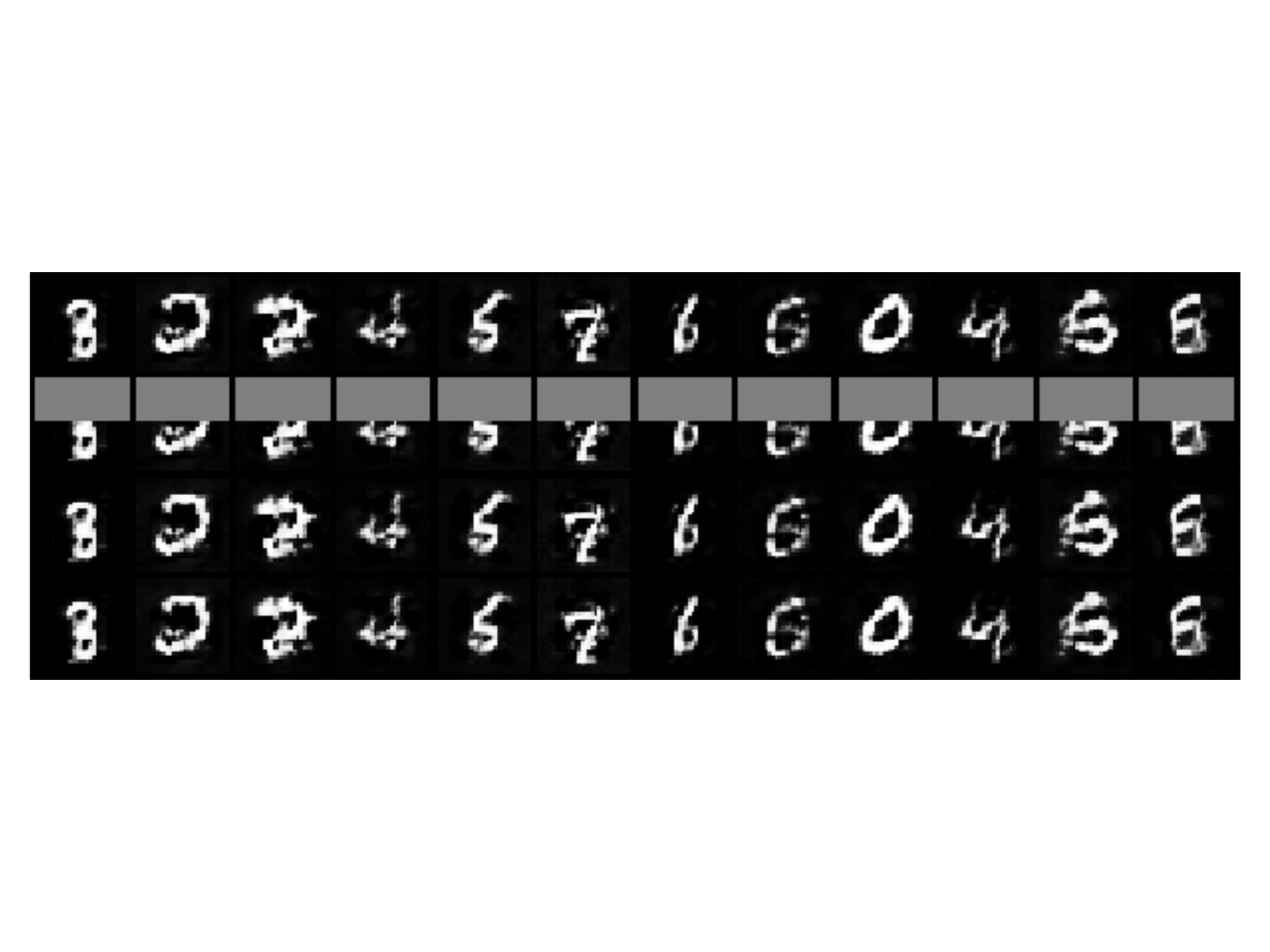}
    \end{subfigure}
    \caption{Half image mask inpainting gradient descent successful reconstructions.}
    \label{fig:inpainting_top_images_succ}
\end{figure}

\section{Conclusions}
In this paper we have introduced a novel perspective regarding the inversion of deep generative networks and its connection to sparse representation theory. Building on this, we have proposed novel invertibility guarantees for such a model for both random and trained networks. We have accompanied our analysis by novel pursuit algorithms for this inversion and presented numerical experiments that validate our theoretical claims and the superiority of our approach compared to the more classic gradient descent. We believe that the insights underlining this work could lead to a broader activity which further improves the inversion of these models in a variety of signal processing tasks.

\FloatBarrier

\bibliography{bib}
\bibliographystyle{plain}


\appendix

\section{Theorem \ref{thm:uniqueness}: Proof}
\label{app:uniqueness}

\begin{proof}

The main idea of the proof is to show that under the conditions of Theorem \ref{thm:uniqueness} the inversion task at every layer $i \in \{1,\ldots,L+1\}$ has a unique global minimum. For this goal we utilize the well-known uniqueness guarantee from sparse representation theory.
\begin{lemma}[Sparse Representation - Uniqueness Guarantee \cite{donoho2003optimally,elad2010sparse}]
    If a system of linear equations $\rvy = \rmW \rvx$ has a solution $\rvx$ satisfying $\norm{\rvx}_0 < \spark(\rmW)/2$, then this solution is necessarily the sparset possible.
\end{lemma}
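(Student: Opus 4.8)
The plan is to prove this classical uniqueness guarantee by contradiction, leveraging only the definition of the spark as the smallest number of linearly dependent columns. The single fact I would extract first is a lower bound on the sparsity of any nontrivial kernel vector: if $\rvv \neq \rvzero$ and $\rmW \rvv = \rvzero$, then the columns of $\rmW$ indexed by $\Support(\rvv)$ form a nontrivial linear dependence, so by the definition of the spark we must have $\norm{\rvv}_0 \geq \spark(\rmW)$. This is the only structural property of the matrix the argument will use.

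With this in hand, I would suppose for contradiction that there exists a competing solution $\rvx'$ with $\rmW \rvx' = \rvy$ and $\norm{\rvx'}_0 \leq \norm{\rvx}_0$ but $\rvx' \neq \rvx$. Setting $\rvv = \rvx - \rvx'$ and subtracting the two linear systems gives $\rmW \rvv = \rvzero$, with $\rvv$ nonzero by assumption. The crux is then a support-counting step: since the support of a difference is contained in the union of the individual supports, the $\ell_0$ count is subadditive, yielding $\norm{\rvv}_0 \leq \norm{\rvx}_0 + \norm{\rvx'}_0 \leq 2\norm{\rvx}_0$. Invoking the hypothesis $\norm{\rvx}_0 < \spark(\rmW)/2$ then gives $\norm{\rvv}_0 < \spark(\rmW)$.

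This directly contradicts the kernel lower bound from the first step, so no such $\rvx'$ can exist; every solution at least as sparse as $\rvx$ must coincide with $\rvx$, which establishes that $\rvx$ is the unique sparsest solution. The argument is short and self-contained, and there is no deep obstacle here. The only places that require care — and the closest thing to a sticking point — are the subadditivity of the $\ell_0$ count and the precise translation of the spark definition into the kernel sparsity bound; both are elementary, but they must be stated cleanly since the entire proof hinges on the strict inequality $2\norm{\rvx}_0 < \spark(\rmW)$ that forces the difference vector to be too sparse to lie in the kernel.
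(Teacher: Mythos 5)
Your proof is correct and is exactly the classical argument behind this lemma: the paper itself states it without proof, deferring to the cited references \cite{donoho2003optimally,elad2010sparse}, and the proof given there is precisely your contradiction via the kernel bound $\norm{\rvv}_0 \geq \spark(\rmW)$ for any nonzero $\rvv$ with $\rmW\rvv = \rvzero$, combined with subadditivity $\norm{\rvx - \rvx'}_0 \leq \norm{\rvx}_0 + \norm{\rvx'}_0 < \spark(\rmW)$. Both of the delicate points you flag are handled correctly, so nothing is missing.
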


Using the above Lemma, we can conclude that if $\rvx_L$ obeys $\norm{\rvx_L}_0 = s_L < \spark(\rmW_L)/2$, then $\rvx_L$ is the \emph{unique} vector that has at most $s_L$ nonzeros, while satisfying the equation $\phi^{-1}(\rvx) = \rmW_L \rvx_L$. 

Moving on to the previous layer, we can employ again the above Lemma for the supported vector $\rvx_L^{\S_L}$. This way, we can ensure that $\rvx_{L-1}$ is the unique $s_{L-1}$-sparse solution of $\rvx_L^{\S_L} = \rmW_{L-1}^{\S_L} \rvx_{L-1}$ as long as
\begin{equation}
    s_{L-1} = \norm{\rvx_{L-1}}_0 < \frac{\spark(\rmW_{L-1}^{\S_L})}{2}.
\end{equation}
However, the condition $s_{L-1} = \norm{\rvx_{L-1}}_0 < \frac{\subspark(\rmW_{L-1}, s_L)}{2}$ implies that the above necessarily holds. This way we can ensure that each layer $i$, $i\in \{1,\ldots,L-1\}$ is the unique sparse solution.

Finally, in order to invert the first layer we need to solve $\rvx_1^{\S_1} = \rmW_0^{\S_1} \rvz$. If $\rmW_0^{\S_1}$ has full column-rank, this system either has no solution or a unique one. In our case, we do know that a solution exists, and thus, necessarily, it is unique. A necessary but insufficient condition for this to be true is $s_1 \ge n_0$. The additional requirement $\subrank(\rmW_0, s_1) = n_0 \le s_1$ is sufficient for $\rvz$ to be the unique solution, and this concludes the proof.

\end{proof}

\section{The Oracle Estimator}
\label{app:oracle}

The motivation for studying the recovery ability of the Oracle is that it can reveal the power of utilizing the inherent sparsity of the feature maps. Therefore, we analyze the layer-wise Oracle estimator described in Algorithm \ref{alg:layered_oracle}, which is similar to the layer-by-layer fashion we adopt in both the Layered Basis-Pursuit (Algorithm \ref{alg:layered_bp}) and in the Latent-Pursuit (Algorithm \ref{alg:latent_pursuit}). In this analysis we assume that the contaminating noise is white additive Gaussian. 

\begin{algorithm}[H]
\caption{The Layered-Wise Oracle} \label{alg:layered_oracle}
\textbf{Input:} $\rvy = G(\rvz) + \rve \in \R^n$, and \emph{supports of each layer} $\{\S_i\}_{i=1}^L$.\\
\textbf{First step:} $\hrvx_L = \argmin_\rvx ~ \frac{1}{2}\norm{\phi^{-1}(\rvy) - \brmW_L \rvx}_2^2$, where $\brmW_L$ is the column supported matrix $\rmW_L[:, \S_L]$.\\
\textbf{Intermediate steps:} For any layer $i =L-1,\ldots,1$, set $\hrvx_i = \argmin_\rvx ~ \frac{1}{2}\norm{\hrvx_{i+1}^{\S_{i+1}} - \brmW_i \rvx}_2^2$, where $\brmW_i$ is the row and column supported matrix $\rmW_i[\S_{i+1}, \S_i]$.\\
\textbf{Final step:} Set $\hrvz = \argmin_\rvz ~ \frac{1}{2}\norm{\hrvx_1^{\S_1} - \rmW_0^{\S_1} \rvz}_2^2$.
\end{algorithm}

The noisy signal $\rvy$ carries an additive noise with energy proportional to its dimension, $\sigma^2 n$. Theorem \ref{thm:oracle} below suggests that the Oracle can attenuate this noise by a factor of $\frac{n_0}{n}$, which is typically much smaller than $1$. Moreover, the error in each layer is proportional to its cardinality $\sigma^2 s_i$. These results are expected, as the Oracle simply projects the noisy signal on low-dimensional subspaces of known dimension. That said, this result reveals another advantage of employing the sparse coding approach over solving least squares problems, as the error can be proportional to $s_i$ rather than to $n_i$. 

\begin{theorem}[The Oracle] \label{thm:oracle}
    Given a noisy signal $\rvy = G(\rvz) + \rve$, where $\rve \sim \N(\rvzero,\sigma^2\rmI)$, and assuming known supports $\{\S_i\}_{i=1}^L$, the recovery errors satisfy \footnote{For simplicity we assume here that $\phi$ is the identity function.}:
    \begin{equation}
        \frac{\sigma^2}{\prod_{j=i}^L \lmax(\brmW_j^T \brmW_j)} s_i \leq \E \norm{\hrvx_i - \rvx_i}_2^2 \leq \frac{\sigma^2}{\prod_{j=i}^L \lmin(\brmW_j^T \brmW_j)} s_i,
    \end{equation}
    for $i \in \{1, \ldots, L\}$, where $\brmW_i$ is the row and column supported matrix, $\rmW_i[\S_{i+1},\S_i]$. The recovery error bounds for the latent vector are similarly given by:
    \begin{equation}
        \frac{\sigma^2}{\prod_{j=0}^L \lmax(\brmW_j^T \brmW_j)} n_0 \leq \E \norm{\hrvz - \rvz}_2^2 \leq \frac{\sigma^2}{\prod_{j=0}^L \lmin(\brmW_j^T \brmW_j)} n_0.
    \end{equation}
\end{theorem}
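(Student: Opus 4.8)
The plan is to exploit the fact that, with $\phi$ taken to be the identity, every step of Algorithm \ref{alg:layered_oracle} is a least-squares solve onto a known column space, so the estimation error at each layer is a \emph{linear} image of the Gaussian noise $\rve$. First I would record the noiseless layer relations restricted to the true supports: since the $\relu$ is active exactly on $\S_{i+1}$, the true vectors satisfy $\rvx_{i+1}^{\S_{i+1}} = \brmW_i \rvx_i^{\S_i}$ for $1\le i\le L-1$, together with $\rvy = \brmW_L \rvx_L^{\S_L} + \rve$ and $\rvx_1^{\S_1} = \rmW_0^{\S_1}\rvz$. Under the support conditions each $\brmW_i$ has full column rank, so $\brmW_i^{+}\brmW_i = \rmI$ and the oracle's solution at layer $i$ obeys $\hrvx_i^{\S_i} - \rvx_i^{\S_i} = \brmW_i^{+}\,(\hrvx_{i+1}^{\S_{i+1}} - \rvx_{i+1}^{\S_{i+1}})$. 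Unrolling from the base case $\hrvx_L - \rvx_L^{\S_L} = \brmW_L^{+}\rve$ gives the closed form $\hrvx_i^{\S_i}-\rvx_i^{\S_i} = M_i \rve$ with $M_i := \brmW_i^{+}\brmW_{i+1}^{+}\cdots\brmW_L^{+}$, and likewise $\hrvz - \rvz = M_0\rve$ with $M_0 = (\rmW_0^{\S_1})^{+} M_1$. Making the full-column-rank assumption explicit (it is guaranteed by the subspark/subrank conditions of Theorem \ref{thm:uniqueness}) is what kills the deterministic bias term and leaves a purely linear noise map.

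Next I would take expectations. Because $\rve\sim\N(\rvzero,\sigma^2\rmI)$, the identity $\E[\rve^T A \rve] = \sigma^2\Tr(A)$ gives $\E\norm{\hrvx_i - \rvx_i}_2^2 = \sigma^2\Tr(M_i^T M_i) = \sigma^2\Tr(M_i M_i^T)$, so the whole problem reduces to sandwiching $\Tr(M_iM_i^T)$. Since $M_iM_i^T$ is $s_i\times s_i$, its trace is the sum of its $s_i$ eigenvalues, whence $s_i\,\lmin(M_iM_i^T)\le \Tr(M_iM_i^T)\le s_i\,\lmax(M_iM_i^T)$, and it only remains to bound the extreme eigenvalues of this Gram matrix.

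Those bounds follow from a short induction on $M_iM_i^T = \brmW_i^{+}\,(M_{i+1}M_{i+1}^T)\,(\brmW_i^{+})^T$. Writing $P=M_{i+1}M_{i+1}^T$ and using $\lmin(P)\,\rmI \preceq P \preceq \lmax(P)\,\rmI$, conjugation by $\brmW_i^{+}$ yields $\lmin(P)\,\brmW_i^{+}(\brmW_i^{+})^T \preceq M_iM_i^T \preceq \lmax(P)\,\brmW_i^{+}(\brmW_i^{+})^T$. A one-line computation shows $\brmW_i^{+}(\brmW_i^{+})^T = (\brmW_i^T\brmW_i)^{-1}$, whose extreme eigenvalues are $\lmax(\brmW_i^T\brmW_i)^{-1}$ and $\lmin(\brmW_i^T\brmW_i)^{-1}$; this propagates exactly one factor $\lmax(\brmW_i^T\brmW_i)^{-1}$ into the lower eigenvalue bound and one factor $\lmin(\brmW_i^T\brmW_i)^{-1}$ into the upper one. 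Starting from $M_LM_L^T = (\brmW_L^T\brmW_L)^{-1}$ and iterating down to layer $i$ gives $\prod_{j=i}^L \lmax(\brmW_j^T\brmW_j)^{-1}\le \lmin(M_iM_i^T)$ and $\lmax(M_iM_i^T)\le \prod_{j=i}^L \lmin(\brmW_j^T\brmW_j)^{-1}$, which combined with the trace sandwich produces the stated bounds. The latent-vector bounds are identical, with the extra factor $(\rmW_0^{\S_1})^{+}$ contributing the $j=0$ term $\lmin((\rmW_0^{\S_1})^T\rmW_0^{\S_1})=\varphi$ and with the count $n_0$ (the ambient dimension of $\rvz$) in place of $s_i$.

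The main obstacle is the lower bound. Propagating norms directly through $M_i^T M_i$ fails, because that $n\times n$ matrix is severely rank-deficient (rank $s_i\ll n$) and the intermediate error $\rve_{i+1}$ generically has a component in $\ker(\brmW_i^{+}) = \operatorname{range}(\brmW_i)^{\perp}$, so no deterministic per-step inequality $\norm{\rve_i}_2\gtrsim\norm{\rve_{i+1}}_2$ can hold. The resolution — the one nontrivial idea — is to carry the recursion on the \emph{compressed} Gram matrix $M_iM_i^T\in\R^{s_i\times s_i}$, which the induction shows is positive definite, and to use the operator-monotone sandwich $\lmin(P)\,AA^T\preceq APA^T\preceq\lmax(P)\,AA^T$ rather than any statement about the singular vectors of the product; this sidesteps the kernel entirely and makes both bounds fall out symmetrically.
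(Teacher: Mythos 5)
Your proposal is correct and follows essentially the same route as the paper's proof: both express the layer-$i$ error as $M_i\rve$ with $M_i = \brmW_i^{\dagger}\brmW_{i+1}^{\dagger}\cdots\brmW_L^{\dagger}$, reduce $\E\norm{\hrvx_i-\rvx_i}_2^2$ to $\sigma^2\Tr(M_iM_i^T)$, and peel off one factor of $\lmin(\brmW_j^T\brmW_j)^{-1}$ (resp.\ $\lmax(\brmW_j^T\brmW_j)^{-1}$) per layer. The only differences are bookkeeping --- you carry Loewner-order bounds on the Gram matrix by induction and apply the trace sandwich once at the end, whereas the paper applies the eigenvalue bound inside the trace at each step --- together with your welcome explicitness about the full-column-rank assumption and the lower bound, both of which the paper only sketches.
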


\begin{proof}
Assume $\rvy = \rvx + \rve$ with $\rvx = G(\rvz)$, then the Oracle for the $L$th layer is $\hrvx_L^{\S} = \brmW_L^\dagger \rvy$. Since $\rvy = \brmW_L \rvx_L^S + \rve$, we get that $\hrvx_L^{\S} = \rvx_L^{\S} + \trve_L$, where $\trve_L = \brmW_L^\dagger \rve$, and $\trve_L \sim \N(\rvzero, \sigma^2 (\brmW_L^T \brmW_L)^{-1})$. Therefore, using the same proof technique as in \cite{aberdam2019multi}, the upper bound on the recovery error in the $L$th layer is:
\begin{equation}
    \E \norm{\hrvx_L - \rvx_L}_2^2 = \sigma^2 \trace ((\brmW_L^T \brmW_L)^{-1}) \leq \sigma^2 \frac{s_L}{ \lmin(\brmW_L^T \brmW_L)}.
\end{equation}
Using the same approach we can derive the lower bound by using the largest eigenvalue of $\brmW_L^T \brmW_L$. 

In a similar fashion, we can write $\hrvx_i^{\S} = \rvx_i^{\S} + \trve_i$ for all $i \in \{0, \ldots, L-1\}$, where $\trve_i = \rmA_{[i,L]} \rve$ and $\rmA_{[i,L]} \triangleq \brmW_i^\dagger \brmW_{i+1}^\dagger \cdots \brmW_L^\dagger $. Therefore, the upper bound for the recovery error in the $i$th layer becomes:
\begin{equation}
\begin{split}
    \E \norm{\hrvx_i - \rvx_i}_2^2 & = \E \norm{\rmA_{[i,L]} \rve}_2^2 \\
    & = \sigma^2\trace \left( \rmA_{[i,L]} \rmA_{[i,L]}^T \right) \\
    & = \sigma^2\trace \left( \rmA_{[i,L-1]} \brmW_L^\dagger (\brmW_L^\dagger)^T  \rmA_{[i,L-1]}^T \right) \\
    & = \sigma^2\trace \left( \rmA_{[i,L-1]} (\brmW_L^T \brmW_L)^{-1}  \rmA_{[i,L-1]}^T \right) \\
    & \leq \frac{\sigma^2}{\lmin(\brmW_L^T \brmW_L)} \trace \left( \rmA_{[i,L-1]} \rmA_{[i,L-1]}^T \right) \\
    & \leq ~~ \cdots ~~ \\
    & \leq \frac{\sigma^2}{\prod_{j=i+1}^L \lmin(\brmW_j^T \brmW_j)}\trace \left( \rmA_{[i,i]} \rmA_{[i,i]}^T \right) \\
    & = \frac{\sigma^2}{\prod_{j=i+1}^L \lmin(\brmW_j^T \brmW_j)}\trace \left( (\brmW_i^T \brmW_i)^{-1} \right) \\
    & \leq \frac{\sigma^2}{\prod_{j=i}^L \lmin(\brmW_j^T \brmW_j)} s_i,
\end{split}
\end{equation}
and this concludes the proof.

\end{proof}

\section{Theorem \ref{thm:layered_bp}: Proof}
\label{app:layered_bp}

\begin{proof}
We first recall the stability guarantee from \cite{tropp2006just} for the basis-pursuit.

\begin{lemma}[Basis Pursuit Stability \cite{tropp2006just}] \label{lemma:bp}
Let $\rvx^*$ be an unknown sparse representation with known cardinality of $\norm{\rvx^*}_0 = s$, and let $\rvy = \rmW \rvx^* + \rve$, where $\rmW$ is a matrix with unit-norm columns and $\norm{\rve}_2 \leq \epsilon$. Assume the mutual coherence of the
dictionary $\rmW$ satisfies $s < 1/(3\mu(\rmW))$. Let $\hrvx = \argmin_\rvx \frac{1}{2}\norm{\rvy - \rmW \rvx}_2^2 + \lambda \norm{\rvx}_1$, with $\lambda = 2 \epsilon$. Then, $\hrvx$ is unique, the support of $\hrvx$ is a subset of the support of $\rvx^*$, and
\begin{equation}
    \norm{\rvx^* - \hrvx}_{\infty} < (3+\sqrt{1.5}) \epsilon.
\end{equation}
\end{lemma}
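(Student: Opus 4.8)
The plan is to prove the lemma by the classical primal--dual certificate argument: solve the basis-pursuit problem restricted to the true support, bound the deviation of that restricted minimizer from $\rvx^*$ in $\ell_\infty$, and then certify that the restricted minimizer is in fact the global one by checking the subgradient optimality condition at every off-support coordinate. Throughout I write $S=\Support(\rvx^*)$ with $\abs{S}=s$, let $\rmW_S$ denote the submatrix of columns of $\rmW$ indexed by $S$, and set $\mu=\mu(\rmW)$. The structural fact I would establish first is that the Gram matrix $\rmW_S^T\rmW_S$ is invertible and well conditioned: since the columns have unit norm and every off-diagonal entry is at most $\mu$ in absolute value, a Gershgorin / diagonal-dominance estimate gives $\lmin(\rmW_S^T\rmW_S)\ge 1-(s-1)\mu$ and $\norm{(\rmW_S^T\rmW_S)^{-1}}_{\infty\to\infty}\le \tfrac{1}{1-(s-1)\mu}$. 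The hypothesis $s<1/(3\mu)$ forces $(s-1)\mu<\tfrac13$, so both quantities are controlled by the explicit constant $\tfrac{1}{1-1/3}=\tfrac32$, which is exactly where the numbers in the final bound originate.

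First I would analyze the restricted program $\hrvx_S=\argmin_{\Support(\rvx)\subseteq S}\tfrac12\norm{\rvy-\rmW\rvx}_2^2+\lambda\norm{\rvx}_1$. Because $\rmW_S$ has full column rank, this objective is strictly convex, so $\hrvx_S$ is unique and is characterized by the subgradient condition $\rmW_S^T(\rmW_S\hrvx_S-\rvy)+\lambda\rvg=\rvzero$ for some $\rvg$ with $\norm{\rvg}_\infty\le 1$. Substituting $\rvy=\rmW_S\rvx^*+\rve$ gives the closed form $\hrvx_S-\rvx^*=(\rmW_S^T\rmW_S)^{-1}(\rmW_S^T\rve-\lambda\rvg)$, and I would bound its $\ell_\infty$ norm by splitting into a noise term and a bias term. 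For the noise term, $(\rmW_S^T\rmW_S)^{-1}\rmW_S^T\rve=\rmW_S^\dagger\rve$, so $\norm{\rmW_S^\dagger\rve}_\infty\le\norm{\rmW_S^\dagger\rve}_2\le \epsilon/\sqrt{\lmin(\rmW_S^T\rmW_S)}<\sqrt{1.5}\,\epsilon$; for the bias term, $\lambda\norm{(\rmW_S^T\rmW_S)^{-1}\rvg}_\infty\le \lambda\cdot\tfrac{1}{1-(s-1)\mu}<2\epsilon\cdot\tfrac32=3\epsilon$ using $\lambda=2\epsilon$. Adding these yields $\norm{\hrvx_S-\rvx^*}_\infty<(3+\sqrt{1.5})\,\epsilon$, the advertised estimate.

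It remains to certify that extending $\hrvx_S$ by zeros off $S$ is the unique global minimizer, and this is the step I expect to be the main obstacle. Let $\rvr=\rvy-\rmW_S\hrvx_S$ be the residual and let $\rmP=\rmW_S\rmW_S^\dagger$ be the orthogonal projector onto the column span of $\rmW_S$. Using $\rmW_S^T\rvr=\lambda\rvg$ together with $(\rmI-\rmP)\rvr=(\rmI-\rmP)\rve$, I would derive the decomposition $\rvr=\lambda\,\rmW_S(\rmW_S^T\rmW_S)^{-1}\rvg+(\rmI-\rmP)\rve$. For each $j\notin S$ this gives $\abs{\rvw_j^T\rvr}\le \lambda\norm{\rmW_S^\dagger\rvw_j}_1+\epsilon$, and the Exact-Recovery-Coefficient estimate $\norm{\rmW_S^\dagger\rvw_j}_1\le \tfrac{s\mu}{1-(s-1)\mu}<\tfrac12$ (again a consequence of $s<1/(3\mu)$) yields $\abs{\rvw_j^T\rvr}<\lambda\cdot\tfrac12+\epsilon=2\epsilon=\lambda$.

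The \emph{strict} inequality is what does the work: it shows the off-support optimality condition holds with slack, so no coordinate outside $S$ can be activated; hence $\Support(\hrvx)\subseteq S$ and the zero-extended $\hrvx_S$ is optimal for the full program, while strict convexity on $S$ combined with the strict off-support inequality delivers uniqueness. The delicate points, which I would treat carefully, are getting the residual decomposition right and tracking that every coherence estimate stays \emph{strict} under $s<1/(3\mu)$, since the certificate collapses the moment any of these inequalities becomes merely non-strict.
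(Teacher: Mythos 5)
Your proof is correct and takes essentially the same route as the result the paper relies on: the paper states this lemma as an import from \cite{tropp2006just} without proving it, and your reconstruction is precisely the classical dual-certificate argument underlying that citation (restricted program, Gershgorin/Neumann coherence bounds, ERC-type off-support certificate). All constants check out under $s<1/(3\mu)$, namely $1/\sqrt{\lmin(\rmW_S^T\rmW_S)}<\sqrt{1.5}$, $\norm{(\rmW_S^T\rmW_S)^{-1}}_{\infty\to\infty}<3/2$, and $\max_{j\notin S}\norm{\rmW_S^\dagger\rvw_j}_1<1/2$, which give exactly the bound $(3+\sqrt{1.5})\epsilon$ and the strict slack $\abs{\rvw_j^T\rvr}<\lambda$; your compressed uniqueness step is fine since all minimizers share the same fitted vector $\rmW\hrvx$ and hence the same correlations, so the strict slack confines every solution's support to $S$, where strict convexity finishes the argument.
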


In order to use the above lemma in our analysis we need to modify it such that $\rmW$ does not need to be column normalized and that the error is $\ell_2$- and not $\ell_{\infty}$-bounded. For the first  modification we decompose a general unnormalized matrix $\rmW$ as $\trmW \rmD$, where $\trmW$ is the normalized matrix, $\trvw_i = \rvw_i / \norm{\rvw_i}_2$, and $\rmD$ is a diagonal matrix with $d_i = \norm{\rvw_i}_2$. Using the above lemma we get that 
\begin{equation}
    \norm{\rmD(\rvx^* - \hrvx)}_{\infty} < (3+\sqrt{1.5}) \epsilon.
\end{equation}
Thus, the error in $\hrvx$ is bounded by
\begin{equation}
    \norm{\rvx - \hrvx}_{\infty} < \frac{(3+\sqrt{1.5})}{\min_i \norm{\rvw_i}_2} \epsilon.
\end{equation}
Since Lemma \ref{lemma:bp} guarantees that the support of $\hrvx$ is a subset of the support of $\rvx^*$, we can conclude that 
\begin{equation}
    \norm{\rvx - \hrvx}_{2} < \frac{(3+\sqrt{1.5})}{\min_i \norm{\rvw_i}_2} \epsilon \sqrt{s}.
\end{equation}
Under the conditions of Theorem \ref{thm:layered_bp}, we can use the above conclusion to guarantee that estimating $\rvx_L$ from the noisy input $\rvy$ using Basis-Pursuit must lead to a unique $\hrvx_L$ such that its support is a subset of that of $\rvx_L$. Also, 
\begin{equation}
    \norm{\rvx_L - \hrvx_L}_{2} < \epsilon_L = \frac{(3+\sqrt{1.5})}{\min_j \norm{\rvw_{L,j}}_2} \epsilon_{L+1} \sqrt{s_L},
\end{equation}
where $\rvw_{L,j}$ is the $j$th column in $\rmW_L$, and $\epsilon_{L+1} = \epsilon \ell$ as $\phi^{-1}(\rvy)$ can increase the noise by a factor of $\ell$. 

Moving on to the estimation of the previous layer, we have that $\hrvx_L^{\hS_L} = \rmW_{L-1}^{\hS_L}\rvx_{L-1} + \rve_L$, where $\norm{\rve_L}_2 \leq \epsilon_L$. According to Theorem \ref{thm:layered_bp} assumptions, the mutual coherence condition holds, and therefore, we get that the support of $\hrvx_{L-1}$ is a subset of the support of $\rvx_{L-1}$,  $\hrvx_{L-1}$ is unique, and that
\begin{equation}
    \norm{\rvx_{L-1} - \hrvx_{L-1}}_{2} < \epsilon_{L-1} = \frac{(3+\sqrt{1.5})}{\min_j \norm{\rvw^{\hS_L}_{L-1,j}}_2} \epsilon_L \sqrt{s_{L-1}}.
\end{equation}
Using the same technique proof for all the hidden layers results in 
\begin{equation}
    \norm{\rvx_{i} - \hrvx_{i}}_{2} < \epsilon_{i} = \frac{(3+\sqrt{1.5})}{\min_j \norm{\rvw^{\hS_{i+1}}_{i,j}}_2} \epsilon_{i+1} \sqrt{s_{i}}, \text{ for all } i \in \{1, \ldots, L-1\},
\end{equation}
where $\rvw^{\hS_{i+1}}_{i,j}$ is the $j$th column in $\rmW_i^{\hS_{i+1}}$.

Finally, we have that $\hrvx_1^{\hS_1} = \rmW^{\hS_1}_0 \rvz + \rve_1$, where $\norm{\rve_1}_2 \leq \epsilon_1$. Therefore, if $\varphi = \lmin((\rmW_0^{\hS_1})^T \rmW_0^{\hS_1}) > 0$, and
\begin{equation}
    \hrvz = \argmin_\rvz \frac{1}{2} \norm{\hrvx_1^{\hS_1} - \rmW_0^{\hS_1} \rvz}_2^2.
\end{equation}
Then,
\begin{equation}
    \norm{\hrvz - \rvz}_2^2 = \rve_1^T \left( (\rmW_0^{\hS_1})^T \rmW_0^{\hS_1} \right)^{-1} \rve_1 \leq \frac{1}{\varphi} \epsilon_1^2,
\end{equation}
which concludes Theorem \ref{thm:layered_bp} guarantees.

\end{proof}


\end{document}